\documentclass{article}

\usepackage[final]{neurips_2025}

\usepackage[utf8]{inputenc} 
\usepackage[T1]{fontenc}    
\usepackage{hyperref}[citecolor=magenta]
\hypersetup{
    colorlinks = true,
    citecolor = {magenta},
}

\usepackage{fontawesome5} 

\usepackage{url}            
\usepackage{booktabs}       
\usepackage{amsfonts}       
\usepackage{nicefrac}       
\usepackage{microtype}      
\usepackage{xcolor}         

\usepackage{graphicx}
\usepackage{subfigure}
\usepackage{enumitem}  
\usepackage{amsmath}
\usepackage{amssymb}
\usepackage{mathtools}
\usepackage{amsthm}
\usepackage{multirow}
\usepackage[dvipsnames]{xcolor}
\usepackage{colortbl} 
\definecolor{lightgray}{gray}{0.9}
\definecolor{deepgreen}{RGB}{50,180,50}  
\definecolor{deepred}{RGB}{220,50,50}    

\usepackage{wrapfig}

\usepackage{algorithm}
\usepackage{algpseudocode}

\theoremstyle{plain}
\newtheorem{theorem}{Theorem}[section]

\newtheorem{lemma}[theorem]{Lemma}

\theoremstyle{definition}
\newtheorem{definition}[theorem]{Definition}

\theoremstyle{remark}

\title{Fuz-RL: A Fuzzy-Guided Robust Framework for Safe Reinforcement Learning under Uncertainty}

\author{%
  Xu Wan \\
  Zhejiang University\\
  \texttt{wanxu@zju.edu.cn} \\
  \And
  Chao Yang \\
  Alibaba DAMO Academy \\
  \texttt{xiuxin.yc@alibaba-inc.com} \\
  \AND
  Cheng Yang \\
  Alibaba DAMO Academy \\
  \texttt{charis.yangc@alibaba-inc.com} \\
  \And
  Jie Song \\
  Peking University \\
  \texttt{jie.song@pku.edu.cn} \\
  \And
  Mingyang Sun\thanks{Corresponding author} \\
  Peking University \\
  \texttt{smy@pku.edu.cn} \\
}

\begin{document}

\maketitle

\begin{abstract}
Safe Reinforcement Learning (RL) is crucial for achieving high performance while ensuring safety in real-world applications. However, the complex interplay of multiple uncertainty sources in real environments poses significant challenges for interpretable risk assessment and robust decision-making. To address these challenges, we propose \textbf{Fuz-RL}, a fuzzy measure-guided robust framework for safe RL. 
Specifically, our framework develops a novel fuzzy Bellman operator for estimating robust value functions using Choquet integrals. Theoretically, we prove that solving the Fuz-RL problem (in Constrained Markov Decision Process (CMDP) form) is equivalent to solving distributionally robust safe RL problems (in robust CMDP form), effectively avoiding min-max optimization. Empirical analyses on safe-control-gym and safety-gymnasium scenarios demonstrate that Fuz-RL effectively integrates with existing safe RL baselines in a model-free manner, significantly improving both safety and control performance under various types of uncertainties in observation, action, and dynamics. 

\faGithub \ \textit{\textbf{The code is available in \href{https://github.com/waunx/FuzRL}{Here}.}}

\end{abstract}

\section{Introduction}

While safe reinforcement learning (RL) has achieved remarkable success in safety-crucial decision-making tasks, deploying safe RL in real-world applications remains challenging due to multiple sources of uncertainty \cite{as2022constrained, chow2018risk, wachi2024safe}. Recent methods using Lyapunov functions and reachability analysis provide theoretical safety guarantees for control tasks \cite{ames2016control, chow2018lyapunov, wang2023enforcing, yu2022reachability,wan2023adapsafe,wan2024adapsafe2}, but focus primarily on idealized, deterministic settings. These approaches struggle with the complex, coupled uncertainties of real-world systems, including sensor noise, actuator delays, and environmental variations.

Existing robust approaches to safe RL face several key limitations for real-world tasks. Traditional min-max techniques \cite{yu1996max, morimoto2005robust, tanabe2022max} focus on worst-case scenarios, resulting in overly conservative policies and computational intractability. While distributionally robust methods attempt to model uncertainty distributions, they typically assume simplified, independent uncertainties through KL-divergence constraints \cite{smirnova2019distributionally} or Gaussian perturbations \cite{tessler2019action}, and treat different perturbations with equal importance. Risk-sensitive approaches using probability measures like conditional Value-at-Risk (VaR) \cite{singh2020improving}, Wang transform \cite{queeney2024risk}, and Entropic VaR \cite{ni2022risk} enable uncertainty quantification through coherent risk functionals, require careful parameter tuning and struggle to handle multiple noise sources effectively.

However, when multiple uncertainties are correlated and converge on a single system component, the resultant performance degradation often exhibits super-additive behavior. 
To handle such coupled uncertainties, \textit{fuzzy measure theory} has shown promise in various decision-making tasks, from robust motion planning \cite{cho2002risk, zhao2020multi, hentout2023review} to adaptive control \cite{liu2015fuzzy, guven2024exploring}, through its ability to model non-additive effects and provide clear behavioral interpretations. While these successes demonstrate the potential of fuzzy measures for uncertainty quantification, extending this approach to constrained Markov decision process (CMDP) remains challenging, particularly in balancing performance objectives with safety constraints under uncertainty.

Motivated by this, we propose a novel \textbf{Fuz}zy-guided framework for Safe \textbf{RL} (Fuz-RL) that unifies uncertainty quantification and enhances current safe RL's robustness through fuzzy theory. Specifically, our main contributions are:

(1) We introduce a novel fuzzy Bellman operator that integrates Choquet integrals of fuzzy measure into value function to achieve robust value estimation under potential perturbations.

(2) We provide robustness equivalence for our Fuz-RL framework by demonstrating that solving Fuz-RL problem (a CMDP form) is equivalent to distributionally robust safe RL problems (a robust CMDP form).

(3) By seamlessly integrating the Fuz-RL framework into three safe RL methods, we conduct several robust assessments involving observation, action, and dynamics uncertainty for safe-control-gym tasks and safety-gymnasium tasks. As expected, Fuz-RL significantly enhances the robustness of safe RL algorithms in multi-source uncertainty scenarios.

\section{Related Work}
Our work builds upon and connects two main research directions: robust approaches in safe RL and fuzzy-based uncertainty quantification in MDPs. We review relevant work in these areas and highlight the research gaps our work addresses.

\paragraph{Robust Approaches in Safe RL.}
Uncertainties within the CMDP framework manifest in various forms, including state shifts \cite{korkmaz2022deep}, action disturbances \cite{tessler2019action}, and dynamics uncertainty \cite{pinto2017robust}. To address these challenges, distributionally robust optimization has been employed, where policies are optimized against worst-case transition kernels within a Wasserstein ambiguity set \cite{smirnova2019distributionally, liu2022distributionally}.
An alternative direction leverages risk-sensitive measures to enhance robustness under safety constraints. For instance, Conditional Value-at-Risk (CVaR) has been integrated into policy optimization to explicitly balance expected return and worst-case performance \cite{ying2021towards}, while coherent distortion risk measures offer formal robustness guarantees in safe RL \cite{queeney2024risk}.
Other approaches focus on adversarial robustness or model-based safety. Some works combine robust model predictive control (MPC) with tube-based constraints to ensure recursive feasibility under uncertainty \cite{zanon2020safe}. Gaussian Processes have also been used as safety oracles in model-based RL to probabilistically identify constraints \cite{airaldi2023learning}. Adversarially robust methods further address observation perturbations via state-adversarial MDPs and policy regularization \cite{li2024safe, liang2022efficient, zhang2020robust}.

\paragraph{Fuzzy Measures in MDPs.} 
Fuzzy logic provides an interpretable framework for quantifying and managing uncertainty in complex systems. Zadeh's fuzzy sets theory \cite{zadeh1965fuzzy} laid the foundation for uncertainty measures. Building on this, possibility theory \cite{dubois1988possibility} emerged as a significant fuzzy approach to uncertainty quantification, offering an alternative to probabilistic methods. Then, \cite{liu2002fuzzy} introduced credibility theory, which combines fuzzy and probability measures to create a self-dual measure. For RL community, fuzzy Q-learning \cite{glorennec1997fuzzy} and possibilistic MDPs \cite{sabbadin2001possibilistic} incorporate fuzzy logic into MDPs. Furthermore, \cite{hein2017particle}, \cite{huang2020interpretable} and \cite{hostetter2023self} developed various fuzzy RL approaches that provide enhanced interpretability and effectiveness compared to deep neural network-based RL methods.
\cite{guo2018fuzzy} introduces $m_{\lambda}$ measure, which combined the possibility measure and necessity measure to balance optimism and pessimism in decision-making systems, which has shown promise in chance-constrained programming. Recent advancements have further expanded the application of fuzzy logic in uncertainty modeling. For instance, \cite{hua2023fuzzy} developed a fuzzy adaptive sliding mode control method for robotic systems with uncertainties, \cite{seoni2023application} conducted a comprehensive review of uncertainty quantification applications for healthcare. 

However, incorporating fuzzy logic for robustness enhancement in safe RL remains unexplored. Inspired by the fuzzy-guided $m_{\lambda}$ fuzzy measure\cite{guo2018fuzzy}, we aim to achieve a robust risk-aversion and reward-pursuitin value estimation through fuzzy logic for safe RL.

\section{Preliminary}

\subsection{Robust CMDP}
We consider formulating the safe RL problem as an infinite-horizon CMDP \cite{altman1999constrained}, which is defined by the tuple $(\mathcal{S}, \mathcal{A}, p, r, c, \gamma, d_0)$, where $\mathcal{S}$ is the finite state space and $\mathcal{A}$ is the action space. $p: \mathcal{S} \times \mathcal{A} \times \mathcal{S} \to [0, 1]$ is the transition model, $r, c: \mathcal{S} \times \mathcal{A} \times \mathcal{S} \to \mathbb{R}$ are the bounded reward function and cost function defining the objective and constraint, respectively. $\gamma \in [0, 1)$ is the discount factor, and $d_0: \mathcal{S} \to [0, 1]$ is the initial state distribution. A policy $\pi: \mathcal{S} \to \Delta(\mathcal{A})$ maps states to distributions over actions.

To address system uncertainty, we introduce a robust formulation of CMDP. Following the concept of $(s,a)$-rectangular uncertainty sets \cite{nilim2005robust}, we define the uncertainty set $\mathcal{P}$ as:
\begin{equation}
\mathcal{P} = \prod_{s,a} \mathcal{P}_s^a, \quad \mathcal{P}_s^a \subseteq \Delta(\mathcal{S})
\end{equation}
where $\mathcal{P}_{s}^{a}$ represents the set of all possible transition probabilities over $\mathcal{S}$ for a given state-action pair $(s,a)$. For $\forall s \in \mathcal{S}, a \in \mathcal{A}$, we define:
\begin{equation}
\mathcal{P}_s^a = \{p(\cdot|s,a) : d(p(\cdot|s,a), p_0(\cdot|s,a)) \leq \epsilon\}
\end{equation}
where $p_0$ is the nominal transition model, $d(\cdot, \cdot)$ is a distance metric, and $\epsilon > 0$ defines the uncertainty radius.

The objective of the robust CMDP is to find a policy $\pi$ that solves the following constrained optimization problem:
\begin{equation}
    \label{eq:robust_cmdp}
    \begin{aligned}
        \max_{\pi} \min_{p \in \mathcal{P}}  \mathbb{E}_{\tau \sim (\pi, p)} \left[ \sum_{t=0}^{\infty} \gamma^t r(s_t, a_t) \right] 
    \quad \text{s.t.} \
    \max_{p \in \mathcal{P}} \ \mathbb{E}_{\tau \sim (\pi,p)} \left[ \sum_{t=0}^{\infty} \gamma^t c(s_t, a_t) \right] \leq B 
    \end{aligned}
\end{equation}
where $\tau \sim (\pi, p)$ denotes trajectories sampled according to $s_0 \sim d_0$, $a_t \sim \pi(\cdot | s_t)$ and $s_{t+1} \sim p(\cdot | s_t, a_t)$, and $B > 0$ is the safety budget constraint.

For computational tractability, we partition the uncertainty set $\mathcal{P}_s^a$ into $K$ distinct levels:
\begin{equation}
\begin{aligned}
\mathcal{P}_s^a = \bigcup_{k=1}^K \mathcal{P}_{s,k}^a, \quad
\mathcal{P}_{s,k}^a = \{p(\cdot|s,a) : \epsilon_{k-1} < d(p(\cdot|s,a), p_0(\cdot|s,a)) \leq \epsilon_k \}
\end{aligned}
\label{eq:uncertainty_set}
\end{equation}
where $0 = \epsilon_0 < \epsilon_1 < ... < \epsilon_K = \epsilon$ defines a sequence of increasing uncertainty thresholds. 

\subsection{Fuzzy Measures Fundamentals}

Traditional probability measures treat uncertainties in a purely additive manner, assuming independent effects from different uncertainties. However, in real control systems, as the distance from nominal dynamics increases, the compound effects of uncertainties often exhibit super-additive behavior. For example, when considering two uncertainty sources $A$ and $B$, their joint impact on system performance may be greater than the sum of their individual effects:
\begin{equation}
m(A \cup B) > m(A) + m(B)
\end{equation}
Moreover, as the system deviates further from the nominal model, the impact on performance and safety constraints typically grows non-linearly. 

To capture such non-additive effects, we first introduce the concept of fuzzy measure, which provides an interpretable way to assess the impacts of uncertainty by assigning non-additive weights to combinations of uncertainty levels. The formal definition is as follows:

\begin{definition}[\textit{Fuzzy Measure}~\cite{murofushi2000fuzzy}]
\label{def:fuzzy_measure}
For each state-action pair $(s,a)$, a fuzzy measure $m$ is a function $m: 2^{\{\mathcal{P}_{s,1}^a, \mathcal{P}_{s,2}^a, \ldots, \mathcal{P}_{s,K}^a\}} \rightarrow [0,1]$, satisfying:

    (1) $m(\emptyset) = 0, \quad m({\{\mathcal{P}_{s,1}^a, \mathcal{P}_{s,2}^a, \ldots, \mathcal{P}_{s,K}^a\}}) = 1$,
    
    (2) If $A \subseteq B \subseteq \{1, 2, ..., K\}$, then $m(\mathcal{P}_{s,A}^a) \leq m(\mathcal{P}_{s,B}^a)$ (monotonicity).
\end{definition}

Measuring uncertainty impacts for all possible subset combinations in $2^{\{\mathcal{P}_{s,1}^a, \mathcal{P}_{s,2}^a, \ldots, \mathcal{P}_{s,K}^a\}}$ is computationally intractable, as it requires an exponential number of samples. To address this computational challenge while preserving the ability to model super-additive effects, we adopt the $\lambda$-fuzzy measure, which offers an efficient parameterization of subset relationships:

\begin{definition}[$\lambda$\textit{-Fuzzy Measure }\cite{denneberg1994non}]
\label{def:sugeno}
A $\lambda$-fuzzy measure satisfies, for all disjoint subsets $\mathcal{P}_{s,A}^a,\mathcal{P}_{s,B}^a$:
\begin{equation}
\label{eq:sugeno}
\begin{aligned}
    m(\mathcal{P}_{s,A \cup B}^a) = m(\mathcal{P}_{s,A}^a) + m(\mathcal{P}_{s,B}^a)
    + \lambda \, m(\mathcal{P}_{s,A}^a) \, m(\mathcal{P}_{s,B}^a),
\end{aligned}
\end{equation}
where $\lambda > -1$ determines the degree of interaction.  When $\lambda \in (-1, 0)$, the measure exhibits \textit{sub-additive} behavior; when $\lambda \in (0, \infty)$, it models \textit{super-additive} effects among different uncertainties. Obviously, if $\lambda = 0$, then a $\lambda$-fuzzy measure is a normalized additive measure, i.e., a probability measure.
\end{definition}

The connection between $\lambda$-fuzzy measures and robust optimization is established through the Choquet integral's pessimistic characterization:

\begin{lemma}[\textit{Choquet Integral Representation} \cite{gilboa1994additive}]
\label{lemma:choquet_pessimism}
For any bounded measurable function $f: \Omega \rightarrow \mathbb{R}$ and $\lambda$-fuzzy measure $m$ with $\lambda \geq 0$:
\[
(C)\int_{\Omega} f\, dm = \min_{P \in \text{core}(m)} \mathbb{E}_P [f],
\]
where $\text{core}(m) = \{ P \in \mathcal{P}(\Omega): P(A) \geq m(A) \}$ is the set of probability measures dominating $m$.
\end{lemma}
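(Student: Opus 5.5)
The plan is to reduce the statement to Schmeidler's classical representation theorem: the Choquet integral with respect to a \emph{convex} (supermodular) capacity equals the minimum of expectations over the capacity's core. So I would (i) show that a $\lambda$-fuzzy measure with $\lambda\ge 0$ is supermodular, (ii) show that the core is nonempty and the Choquet integral is a lower bound on $\mathbb{E}_P[f]$ over the core, and (iii) exhibit a core element attaining the bound. I work on the finite space $\Omega=\{\mathcal{P}_{s,1}^a,\dots,\mathcal{P}_{s,K}^a\}$ of Definition~\ref{def:fuzzy_measure}, where every $f$ is bounded and measurable and all sums are finite.

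\textbf{Step 1 (supermodularity).} I want $m(A\cup B)+m(A\cap B)\ge m(A)+m(B)$ for all $A,B$. It is convenient to use the $\lambda$-rule (\ref{eq:sugeno}) in the form $1+\lambda m(A)=\prod_{i\in A}(1+\lambda m(\{i\}))$ when $\lambda>0$; for $\lambda=0$, $m$ is additive and the claim is trivial. Put $h(A)=\prod_{i\in A}(1+\lambda m_i)$ with factors $\ge1$, and write $A=C\cup A'$, $B=C\cup B'$ with $C=A\cap B$. Then
\[
h(A\cup B)+h(C)-h(A)-h(B)=h(C)\bigl(h(A')-1\bigr)\bigl(h(B')-1\bigr)\ge 0 .
\]
Dividing by $\lambda>0$ gives supermodularity of $m$.

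\textbf{Step 2 (lower bound).} Order $\Omega$ so that $f(\omega_1)\ge\dots\ge f(\omega_n)$, and let $A_j=\{\omega_1,\dots,\omega_j\}$. The Choquet integral is
\[
\sum_j \bigl(f(\omega_j)-f(\omega_{j+1})\bigr)m(A_j),\qquad f(\omega_{n+1}):=0,
\]
with the last term $f(\omega_n)m(\Omega)=f(\omega_n)$. The same Abel summation applied to any probability $P$ expresses $\mathbb{E}_P[f]$ with $P(A_j)$ in place of $m(A_j)$. For $j<n$ the differences $f(\omega_j)-f(\omega_{j+1})$ are nonnegative and $P(A_j)\ge m(A_j)$ on the core. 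At $j=n$ both $P(\Omega)$ and $m(\Omega)$ equal $1$, so the sign of $f(\omega_n)$ does not matter. Hence $\mathbb{E}_P[f]\ge (C)\!\int f\,dm$ for every $P\in\mathrm{core}(m)$.

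\textbf{Step 3 (attainment).} Define $P^*(\{\omega_j\})=m(A_j)-m(A_{j-1})$, which is nonnegative by monotonicity and sums to $1$, so $P^*$ is a probability. By construction $P^*(A_j)=m(A_j)$, so $\mathbb{E}_{P^*}[f]$ equals the Choquet integral. The main obstacle is showing $P^*\in\mathrm{core}(m)$, that is, $P^*(E)\ge m(E)$ for every $E$, not just for the chain sets $A_j$. I would prove this by induction along the chain. Write $E=\{\omega_{j_1},\dots,\omega_{j_r}\}$ with $j_1<\dots<j_r$ and $E_t=\{\omega_{j_1},\dots,\omega_{j_t}\}$. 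Since $E_{t-1}\subseteq A_{j_t-1}$, supermodularity applied to $E_t$ and $A_{j_t-1}$, whose union is $A_{j_t}$ and whose intersection is $E_{t-1}$, gives
\[
m(E_t)-m(E_{t-1})\le m(A_{j_t})-m(A_{j_t-1})=P^*(\{\omega_{j_t}\}).
\]
Telescoping over $t$ yields $m(E)\le P^*(E)$. Combining this with Step~2 shows that the minimum exists and equals the Choquet integral.

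Two remarks. If the lemma is meant on a general measurable $\Omega$, I would approximate $f$ by simple functions and pass to the limit using weak-$*$ compactness of the core; in the paper's setting $\Omega$ is finite, so this is unnecessary. The hypothesis $\lambda\ge0$ is exactly what makes the factors $h(A')-1$ and $h(B')-1$ in Step~1 nonnegative, so that is where it enters.
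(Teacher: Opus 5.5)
Your proof is correct on the $K$-point domain of Definition~\ref{def:fuzzy_measure}, which is the setting in which the paper uses the lemma. The paper gives no argument of its own. It defers to \cite{gilboa1994additive} and never checks that $\lambda\ge 0$ makes $m$ convex; it simply assumes a ``convex'' measure later, in Definition~\ref{def:fuzzy_bellman} and Theorem~\ref{theorem:equivalent}. Your Step~1 supplies that missing link. Steps~2--3 are the Schmeidler--Shapley argument: Abel summation along the upper-level chain gives the lower bound, and the greedy marginal vector $P^*$ is placed in the core by supermodularity applied to $E_t$ and $A_{j_t-1}$, whose union and intersection are indeed $A_{j_t}$ and $E_{t-1}$.

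The cited reference works instead with the additive (M\"obius) representation of capacities, and for $\lambda$-measures that route is shorter. Writing $g_i=m(\{i\})$ and expanding the product gives $m(A)=\sum_{\emptyset\neq B\subseteq A}\lambda^{|B|-1}\prod_{i\in B}g_i$. So for $\lambda\ge 0$ every M\"obius mass $\mu(B)$ is nonnegative, $m$ is a belief function, and $(C)\!\int f\,dm=\sum_{B}\mu(B)\min_{B}f$. Moving each mass $\mu(B)$ onto a minimizer of $f$ in $B$ then gives a core element attaining the value, while your Step~2 still provides the lower bound. That route relies on total monotonicity, which is special to $\lambda\ge 0$. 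Yours uses only $2$-monotonicity, so it also covers the arbitrary convex capacities that the paper's later results actually assume.

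Your closing remark on general $\Omega$ is only a sketch. For simple $f$, the finite construction defines a measure only on the finite algebra generated by the level sets. Extending it to a core element that is tight on that chain yet dominates $m$ on every measurable set is the real content of Schmeidler's theorem (via Hahn--Banach or a maximal-chain argument). The minimizers are also in general only finitely additive unless $m$ is continuous from below. None of this is needed for the lemma as the paper uses it.
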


\section{Fuzzy Measure-based Robust Safe RL Framework}

\subsection{Theoretical Foundation of Fuz-RL}
In this section, we connect fuzzy measure with robust CMDP by introducing the \emph{Fuzzy Bellman operator}.

\paragraph{Fuzzy Bellman Operator.}
Leveraging Lemma \ref{lemma:choquet_pessimism}, we define the fuzzy Bellman operator that encodes worst-case scenarios through the Choquet integral in Definition \ref{def:fuzzy_bellman}:

\begin{definition}[\textit{Fuzzy Bellman Operator}]
\label{def:fuzzy_bellman}
Let $\mathcal{B}(\mathcal{S})$ denote the space of bounded measurable value functions $V$ on $\mathcal{S}$. The fuzzy Bellman operator $\mathcal{F}: \mathcal{B}(\mathcal{S}) \rightarrow \mathcal{B}(\mathcal{S})$ is defined as:
\[
\mathcal{F}(V)(s) = \mathbb{E}_{a\sim \pi} \Big[r(s,a) + \gamma \ (C)\int_{\mathcal{P}_s^a} \mathbb{E}_{s' \sim p}[V(s')]\, dm(p)\Big].
\]
\end{definition}
where $m(\cdot)$ is the convex fuzzy meassure.

Furthermore, we demonstrate that the fuzzy Bellman operator maintains fundamental properties of the standard Bellman operator ($\gamma$-contraction and convergence) when integrated with value functions as detailed in Theorem \ref{thm:gamma_contraction} and Theorem \ref{thm:convergence}. Therefore, the fuzzy Bellman operator can be seamlessly integrated into value functions, enabling the establishment of robust value estimation with theoretical guarantees.

\begin{theorem}[\textit{$\gamma$-contraction of Fuzzy Bellman Operator}]
\label{thm:gamma_contraction}
For any $V_1, V_2 \in \mathcal{B}(\mathcal{S})$,
\[
\|\mathcal{F}(V_1) - \mathcal{F}(V_2)\|_{\infty} \leq \gamma \|V_1 - V_2\|_{\infty}.
\]
\end{theorem}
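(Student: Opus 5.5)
The plan is to reduce the contraction property to two structural facts about the Choquet integral with respect to a fuzzy measure: monotonicity and translation (constant-additivity) invariance. Alternatively, since $m$ is a convex $\lambda$-fuzzy measure with $\lambda \ge 0$, Lemma~\ref{lemma:choquet_pessimism} lets us write the inner term as a minimum of expectations over $\text{core}(m)$. This turns the argument into the standard robust-Bellman contraction proof.

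\textbf{Step 1 (reduce to the inner functional).} Fix $s$. The reward terms cancel in $\mathcal{F}(V_1)(s)-\mathcal{F}(V_2)(s)$, so
\[
|\mathcal{F}(V_1)(s)-\mathcal{F}(V_2)(s)| \le \gamma\, \mathbb{E}_{a\sim\pi}\Big| \Phi_{s,a}(V_1) - \Phi_{s,a}(V_2)\Big|,
\]
where we write $\Phi_{s,a}(V) := (C)\int_{\mathcal{P}_s^a} g_V\, dm$ and $g_V(p) := \mathbb{E}_{s'\sim p}[V(s')]$. It then suffices to show
\[
|\Phi_{s,a}(V_1)-\Phi_{s,a}(V_2)| \le \|V_1-V_2\|_\infty
\]
for every $(s,a)$.

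\textbf{Step 2 (Lipschitz bound on the integrand).} For every $p\in\mathcal{P}_s^a$,
\[
|g_{V_1}(p)-g_{V_2}(p)| \le \mathbb{E}_{s'\sim p}|V_1(s')-V_2(s')| \le \|V_1-V_2\|_\infty =: \delta,
\]
so $g_{V_2}-\delta \le g_{V_1} \le g_{V_2}+\delta$ pointwise. Boundedness of $V$ gives boundedness of $g_V$, so the Choquet integral is well defined.

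\textbf{Step 3 (properties of the Choquet integral).} Here I would use the representation from Lemma~\ref{lemma:choquet_pessimism}:
\[
\Phi_{s,a}(V) = \min_{P\in\text{core}(m)} \mathbb{E}_P[g_V].
\]
Let $P_2$ be a minimizer for $V_2$. Then
\[
\Phi_{s,a}(V_1) - \Phi_{s,a}(V_2) \le \mathbb{E}_{P_2}[g_{V_1}-g_{V_2}] \le \delta.
\]
The argument is symmetric in $V_1$ and $V_2$, which gives the absolute-value bound. If one prefers not to rely on attainment of the minimum, two routes remain. One can use an $\varepsilon$-minimizer. Or one can argue directly from the Choquet definition $\int_0^\infty m(g\ge t)\,dt + \int_{-\infty}^0 (m(g\ge t)-1)\,dt$. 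Monotonicity of $m$ gives monotonicity of the integral. The normalization $m(\Omega)=1$ gives $(C)\int (g+c)\,dm = (C)\int g\,dm + c$. Together these yield
\[
\Phi(V_2)-\delta \le \Phi(V_1) \le \Phi(V_2)+\delta.
\]
This second route works for any fuzzy measure, not only convex ones.

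\textbf{Step 4.} Combining the steps gives $|\mathcal{F}(V_1)(s)-\mathcal{F}(V_2)(s)|\le\gamma\delta$. Taking the supremum over $s$ concludes the proof.

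The main obstacle I expect is Step 3. The Choquet integral is nonlinear, so we cannot write $\Phi(V_1)-\Phi(V_2)$ as an integral of the difference. One must instead pass through monotonicity plus translation invariance, or through the min-representation. A secondary subtlety is measurability of $p\mapsto g_V(p)$ on the level sets of the partition $\{\mathcal{P}_{s,k}^a\}$ on which $m$ is defined. I would handle this by treating $g_V$ as constant on each level or by taking the level-wise infimum, consistent with the discrete $K$-level setup.
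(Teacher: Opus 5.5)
Your proof is correct, and it departs from the paper's exactly at the step you flag as the obstacle.

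\textbf{The paper's argument.} It writes $\mathcal{F}(V_1)(s)-\mathcal{F}(V_2)(s)$ as $\gamma\,\mathbb{E}_{a\sim\pi}\bigl[(C)\int_{\mathcal{P}_s^a}\mathbb{E}_{s'\sim p}[V_1(s')-V_2(s')]\,dm(p)\bigr]$. That is, it moves the difference inside the Choquet integral. It then bounds this via $|(C)\int f\,dm|\le (C)\int |f|\,dm\le \|f\|_\infty\,(C)\int dm$ and normalization.

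\textbf{Why those steps are not valid in general.} The first equality presumes additivity, which the Choquet integral has only for comonotone integrands. The inequality $|(C)\int f\,dm|\le (C)\int |f|\,dm$ also fails for non-additive $m$. For example, take $K=2$ and $g_1=g_2=0.1$, so $\lambda=80$, and $f=(0,-1)$. Then $|(C)\int f\,dm|=0.9$ while $(C)\int |f|\,dm=0.1$. The theorem is nevertheless true, but for the reasons you give.

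\textbf{What each version buys.} The paper's version transcribes the standard linear Bellman proof verbatim. Yours uses only properties the Choquet integral actually has.

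\textbf{Your two routes.}
\begin{itemize}
\item The core representation relies on convexity of $m$, which the operator's definition assumes. It also relies on attainment of the minimum, which is automatic in the $K$-level setting because the core is a compact polytope.
\item Monotonicity plus translation equivariance, $(C)\int (g+c)\,dm=(C)\int g\,dm+c$ (from $m(\Omega)=1$), shows the Choquet integral is $1$-Lipschitz in the sup norm for every normalized monotone measure. This route is strictly more general and is the cleanest way to write the proof.
\end{itemize}

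\textbf{Measurability.} Your remark here is a point the paper leaves implicit. It does not affect the contraction, since passing to level-wise infima preserves the bound $|g_{V_1}-g_{V_2}|\le\delta$ from your Step 2.
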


\begin{theorem}[\textit{Convergence of Fuzzy Bellman Operator}]
\label{thm:convergence}
Let $V^0 \in \mathcal{B}(\mathcal{S})$ be an initial value function and $V^{n+1} = \mathcal{F}(V^n)$. Then $V^n$ converges to a unique fixed point $V^*$ satisfying $V^*=\mathcal{F}(V^*)$ with geometric rate $\|V^n - V^*\|_\infty \leq \gamma^n \|V^0 - V^*\|_\infty$.
\end{theorem}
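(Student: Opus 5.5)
The plan is to invoke the Banach fixed-point theorem, using Theorem~\ref{thm:gamma_contraction} as the only structural input. The work splits into three steps: completeness of the underlying space, existence and uniqueness of the fixed point, and the geometric rate.

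\textbf{Completeness.} I will check that $(\mathcal{B}(\mathcal{S}),\|\cdot\|_\infty)$ is a complete metric space. A Cauchy sequence of bounded measurable functions converges uniformly to a bounded function. Pointwise limits of measurable functions are measurable, so the limit stays in $\mathcal{B}(\mathcal{S})$. Since $\mathcal{S}$ is finite, this is simply completeness of $\mathbb{R}^{|\mathcal{S}|}$ under the sup-norm.

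I also need $\mathcal{F}$ to map $\mathcal{B}(\mathcal{S})$ into itself. Boundedness follows because $r$ is bounded. The Choquet integral of a function bounded by $M$ with respect to a normalized monotone measure lies in $[-M,M]$; this can be seen directly from monotonicity and $m(\text{whole})=1$, or from Lemma~\ref{lemma:choquet_pessimism} as a minimum of expectations. Hence $\|\mathcal{F}(V)\|_\infty\le \|r\|_\infty+\gamma\|V\|_\infty<\infty$.

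\textbf{Existence and uniqueness.} With $\gamma<1$, Theorem~\ref{thm:gamma_contraction} says $\mathcal{F}$ is a strict contraction. I would either cite Banach's theorem or reproduce it briefly. Iterating gives $\|V^{n+1}-V^n\|_\infty\le\gamma^n\|V^1-V^0\|_\infty$. Summing the geometric tail shows $(V^n)$ is Cauchy, so it has a limit $V^*$ by completeness. Continuity of $\mathcal{F}$, which follows from the Lipschitz bound, lets us pass to the limit in $V^{n+1}=\mathcal{F}(V^n)$ to get $V^*=\mathcal{F}(V^*)$. For uniqueness, if $U^*$ is another fixed point, then $\|V^*-U^*\|_\infty=\|\mathcal{F}V^*-\mathcal{F}U^*\|_\infty\le\gamma\|V^*-U^*\|_\infty$, which forces $U^*=V^*$.

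\textbf{Geometric rate.} This is a one-line induction:
\[
\|V^n-V^*\|_\infty=\|\mathcal{F}(V^{n-1})-\mathcal{F}(V^*)\|_\infty\le\gamma\|V^{n-1}-V^*\|_\infty\le\cdots\le\gamma^n\|V^0-V^*\|_\infty .
\]

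\textbf{Main obstacle.} The only delicate point is the self-map and well-definedness of $\mathcal{F}$, in particular that the Choquet integral over $\mathcal{P}_s^a$ is finite and measurable in $s$. Finiteness of $\mathcal{S}$ makes measurability in $s$ trivial. For the Choquet integral I would use the representation of Lemma~\ref{lemma:choquet_pessimism}, a minimum of expectations of a function bounded by $\|V\|_\infty$, to bound it by $\|V\|_\infty$. Everything else follows directly from Theorem~\ref{thm:gamma_contraction}.
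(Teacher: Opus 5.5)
Your proposal is correct and follows the paper's route: the paper also cites the $\gamma$-contraction of Theorem~\ref{thm:gamma_contraction} and the Banach fixed-point theorem for existence and uniqueness, then iterates the contraction against $V^*$ to get the rate. Your version is somewhat more careful in two respects. First, you verify completeness of $(\mathcal{B}(\mathcal{S}),\|\cdot\|_\infty)$ and the self-map property, both of which the paper takes for granted. Second, your one-line induction gives the stated bound $\gamma^n\|V^0-V^*\|_\infty$ directly, whereas the paper's final display detours through $\gamma^n\|V^1-V^0\|_\infty+\gamma^n\|V^0-V^*\|_\infty$.
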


\paragraph{Robust Equivalence.}
Applying Lemma \ref{lemma:choquet_pessimism} shows that the Choquet integral automatically encodes a robust perspective via the fuzzy measure $m$. Consequently, we can prove the following Theorem \ref{theorem:equivalent}:

\begin{theorem}[\textit{Equivalent Theorem}]
\label{theorem:equivalent}

Let $m$ be a convex $\lambda$-fuzzy measure on $\mathcal{P}_s^a$
 defined by Definition \ref{def:sugeno} such that:
(1) $core(m) \subseteq \mathcal{P}$, (2) $\arg\min_{p \in \mathcal{P}} \mathbb{E}[r] \in core(m)$, (3) $\arg\max_{p \in \mathcal{P}} \mathbb{E}[c] \in core(m)$. Let $m'(A) := 1 - m(\mathcal{P}^a_s \setminus A)$ is the dual fuzzy measure of $m$.

Then the \textbf{fuzzy robust safe RL problem (CMDP form)}:
\begin{align}
    \max_{\pi} J^\mathcal{F}_r(\pi) \quad \text{s.t.} \quad J^\mathcal{F}_c(\pi) \leq B
\end{align}
where
\begin{align}
    J^\mathcal{F}_r(\pi) &= \mathbb{E}_{s_0 \sim d_0}\left[\ (C)\int_{\mathcal{P}^a_s} \sum_{t=0}^{\infty} \gamma^t r(s_t, \pi(s_t)) \, dm(p) \right] \\
    J^\mathcal{F}_c(\pi) &= \mathbb{E}_{s_0 \sim d_0}\left[\ (C)\int_{\mathcal{P}^a_s} \sum_{t=0}^{\infty} \gamma^t c(s_t, \pi(s_t)) \, dm'(p) \right]
\end{align}
\textbf{is equivalent to the distributionally robust safe RL problem (robust CMDP form)}:
\begin{align}
    \max_{\pi} \min_{p \in \mathcal{P}} \mathbb{E}_{({\pi, p})}[r] \quad \text{s.t.} \quad \max_{p \in \mathcal{P}} \mathbb{E}_{({\pi, p})}[c] \leq B
\end{align}
\end{theorem}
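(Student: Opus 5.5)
The plan is to prove the equivalence by showing, for every fixed policy $\pi$, that the fuzzy reward objective equals the worst-case expected reward over $\mathcal{P}$, and the fuzzy cost objective equals the worst-case expected cost over $\mathcal{P}$. Once $J^\mathcal{F}_r(\pi)=\min_{p\in\mathcal{P}}\mathbb{E}_{(\pi,p)}[r]$ and $J^\mathcal{F}_c(\pi)=\max_{p\in\mathcal{P}}\mathbb{E}_{(\pi,p)}[c]$ hold for all $\pi$, the two problems have identical objectives and identical feasible sets. They therefore share the same optimal value and the same set of optimal policies, which is the claimed equivalence.

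\textbf{Reward side.} Since $m$ is a convex $\lambda$-fuzzy measure with $\lambda\ge 0$, Lemma~\ref{lemma:choquet_pessimism} applied to the bounded return functional gives
\[
(C)\int \sum_t\gamma^t r\,dm=\min_{P\in core(m)}\mathbb{E}_P\Big[\sum_t\gamma^t r\Big].
\]
Boundedness follows from bounded $r$ and $\gamma<1$. The remaining step is to compare the minimum over $core(m)$ with the minimum over $\mathcal{P}$.
\begin{itemize}
\item Assumption (1), $core(m)\subseteq\mathcal{P}$, gives $\min_{core(m)}\ge\min_{\mathcal{P}}$.
\item Assumption (2) says the minimizer over $\mathcal{P}$ lies in $core(m)$, which gives the reverse inequality.
\end{itemize}
Hence the two minima coincide. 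Taking $\mathbb{E}_{s_0\sim d_0}$ on both sides yields the reward identity.

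\textbf{Cost side.} Here I use duality. First I will show the conjugate identity
\[
(C)\int f\,dm'=-(C)\int(-f)\,dm,
\]
which follows from the layer-cake definition of the Choquet integral and $m'(A)=1-m(A^c)$. Applying Lemma~\ref{lemma:choquet_pessimism} to $-f$ with $f=\sum_t\gamma^t c$ then gives
\[
(C)\int f\,dm'=-\min_{P\in core(m)}\mathbb{E}_P[-f]=\max_{P\in core(m)}\mathbb{E}_P[f].
\]
Assumptions (1) and (3) then identify this with $\max_{p\in\mathcal{P}}\mathbb{E}_{(\pi,p)}[c]$, by the same sandwich argument as on the reward side.

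\textbf{Main obstacle.} The hardest step will be making Lemma~\ref{lemma:choquet_pessimism} apply cleanly to the trajectory-level integrals. The Choquet integral is taken over $\mathcal{P}_s^a$ at each state-action pair, but $J^\mathcal{F}$ integrates whole discounted returns. I would handle this via $(s,a)$-rectangularity. I will define the robust value as the fixed point of the fuzzy Bellman operator (Definition~\ref{def:fuzzy_bellman}), which is unique by Theorems~\ref{thm:gamma_contraction}--\ref{thm:convergence}. Then I apply the lemma stagewise: one-step Choquet equals a one-step minimum over $core(m)$, which equals the minimum over $\mathcal{P}_s^a$ by (1)--(2). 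This shows the fuzzy operator coincides with the robust Bellman operator of Nilim--El Ghaoui, whose fixed point is the rectangular worst-case value. That identifies $J^\mathcal{F}_r$ with the robust objective, and the cost case is symmetric via the dual measure.

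If the stagewise interpretation is not what the authors intend, I would instead interpret $J^\mathcal{F}$ directly as a Choquet integral of the return viewed as a function of $p$, in which case the lemma applies in one shot.
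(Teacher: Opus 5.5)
Your argument is correct, and its core matches the paper's. The reward side (Lemma~\ref{lemma:choquet_pessimism} plus the two-sided sandwich from (1) and (2)) is the paper's Step~3. Your ``one-shot'' fallback is how the paper reads $J^\mathcal{F}_r$ and $J^\mathcal{F}_c$: it applies the core representation directly to the trajectory-level Choquet integral and never uses the fuzzy Bellman operator.

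You differ from the paper in two places.
\begin{enumerate}
\item Cost side. The paper goes through Lemma~\ref{lem:core_duality}, which first asserts $\mathrm{core}(m')=\mathrm{core}(m)$ and then a maximum representation for $m'$. Your conjugate identity $(C)\int f\,dm'=-(C)\int(-f)\,dm$ reaches $\max_{P\in\mathrm{core}(m)}\mathbb{E}_P[f]$ in one line, and it is the sounder route. For convex $m$ one has $m'\ge m$, and $P\ge m$ is equivalent to $P\le m'$, not $P\ge m'$. So the paper's core-equality step is wrong as stated (for non-additive convex $m$ the set $\{P:P\ge m'\}$ is empty), and your route does not need it.
\item Stagewise route. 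Going through $(s,a)$-rectangularity is content the paper does not attempt. It genuinely ties $J^\mathcal{F}$ to the fuzzy Bellman operator and Theorems~\ref{thm:gamma_contraction}--\ref{thm:convergence}. The cost is reinterpreting $J^\mathcal{F}$ as $\mathbb{E}_{s_0\sim d_0}[V^*(s_0)]$ and reading (2)--(3) componentwise.
\end{enumerate}

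One sentence in that stagewise route overclaims. Hypothesis (2) places only the worst-case kernel for the reward objective in the core. It says nothing about the one-step minimizer of $p\mapsto\mathbb{E}_{s'\sim p}[V(s')]$ for an arbitrary $V$. So you cannot conclude that $\mathcal{F}$ coincides with the robust Bellman operator $T_{\mathrm{rob}}$ on all of $\mathcal{B}(\mathcal{S})$.

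What the hypotheses do give is enough:
\begin{itemize}
\item From (1): $\mathcal{F}V\ge T_{\mathrm{rob}}V$ for every $V$.
\item From (2), read per $(s,a)$: a rectangular worst-case kernel can be chosen as the stagewise minimizer against the robust value $V_{\mathrm{rob}}$, which gives the reverse inequality at $V=V_{\mathrm{rob}}$.
\end{itemize}
Hence $\mathcal{F}V_{\mathrm{rob}}=T_{\mathrm{rob}}V_{\mathrm{rob}}=V_{\mathrm{rob}}$, and uniqueness of the fixed point gives $V^*=V_{\mathrm{rob}}$.

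For the cost you also need the analogous operator built with $m'$, since the paper's $\mathcal{F}$ uses only $m$. It is again a $\gamma$-contraction, because the Choquet integral with respect to any normalized capacity is monotone and satisfies $(C)\int(f+k)\,dm'=(C)\int f\,dm'+k$ for constants $k$.
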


The detailed proofs of Theorem \ref{thm:gamma_contraction}, Theorem \ref{thm:convergence} and Theorem \ref{theorem:equivalent} are provided in Appendix \ref{app:proofs}.

\subsection{Practical Implementation of Fuz-RL}
\label{sec:implementation}

Having established the fuzzy Bellman operator and its theoretical properties, we now describe how to implement Fuz-RL in practice. The pseudo-code of Fuz-RL is detailed in Appendix~\ref{app::alg} Algorithm~\ref{alg: Fuz_RL}.

\paragraph{Estimation of Fuzzy Measures.}
To operationalize the fuzzy Bellman operator, we require an efficient method for estimating fuzzy measures $m(\cdot)$ that capture uncertainty impacts across different perturbation subsets. We adopt a neural network-based approach that learns fuzzy measure densities directly from state representations.

For each state $s$, we employ a fuzzy network $\phi$ parameterized as a multi-layer perceptron (MLP) with two hidden layers. The network takes the state vector as input and outputs fuzzy density parameters $\{g_k\}_{k=1}^K$:
\begin{equation}
\mathbf{g} = \sigma(\text{MLP}_\phi(s)),
\label{eq:mlp_fuzzy}
\end{equation}
where $\sigma(\cdot)$ denotes the softmax activation function applied to the network output. To maintain numerical stability, we apply clamping to constrain the fuzzy values within $[\epsilon, 1-\epsilon]$ where $\epsilon = 10^{-4}$, preventing degenerate solutions while satisfying the mathematical constraints of $\lambda$-fuzzy measures.

Given the learned densities $\mathbf{g} = (g_1, \ldots, g_K)$, the interaction parameter $\lambda$ is determined by solving the characteristic equation:
\begin{equation}
\prod_{k=1}^K(1 + \lambda g_k) = 1 + \lambda,
\label{eq:lambda_equation}
\end{equation}
using a hybrid bisection-Newton method with gradient detachment to ensure numerical stability. Once $\lambda$ is obtained, the fuzzy measure for any subset $A \subseteq \{1,\ldots,K\}$ can be computed via the $\lambda$-rule:
\begin{equation}
m(A) = \frac{\prod_{k \in A} (1 + \lambda g_k) - 1}{\lambda}.
\label{eq:lambda_measure}
\end{equation}
To simulate the impact of uncertain system state transitions across different uncertainty levels, we employ a stratified perturbation sampling strategy. For each uncertainty level $k \in \{1, \ldots, K\}$, we apply independent isotropic Gaussian perturbations:
\begin{equation}
\tilde{s}_{k} = s + \epsilon_k \cdot \mathbf{n}_k, \quad \mathbf{n}_k \sim \mathcal{N}(0, I),
\label{eq:gaussian_perturbation}
\end{equation}
where $\epsilon_k$ represents the perturbation scale for uncertainty level $k$, typically set as $\epsilon_k = \epsilon_{\text{base}} \cdot k$ to create a hierarchy of perturbation intensities. For each uncertainty level, we generate $M=5$ independent samples and compute $V(\tilde{s}_k)$ as the average of the value estimates across these samples to reduce estimation variance.
\paragraph{Estimation of Choquet Integrals.}
To approximate the Choquet integrals used in the fuzzy Bellman operator, we leverage the globally learned fuzzy measures. For the standard Choquet integral applied to reward value aggregation, we sort the perturbed value estimates in descending order: $V(\tilde{s}_{(1)}) \geq V(\tilde{s}_{(2)}) \geq \cdots \geq V(\tilde{s}_{(K)})$, where $(i)$ denotes the index after sorting. The corresponding fuzzy measures are computed as $m_i = m(\{(i), (i+1), \ldots, (K)\})$, representing the capacity of the tail sets. Following the discrete Choquet integral formulation, the robust reward value is approximated as:
\begin{equation}
\begin{aligned}
    (C) \int_{\mathcal{P}_s^a} \mathop{\mathbb{E}}_{s' \sim p} [V(s')] \, dm(p) \approx \sum_{i=1}^K V(\tilde{s}_{(i)}) \big(m_i - m_{i+1}\big),
\end{aligned}
\label{eq:standard_choquet}
\end{equation}
where $m_{K+1} = 0$ by convention. For the dual Choquet integral used in cost value aggregation with pessimistic estimation, we sort the cost values in ascending order: $V_c(\tilde{s}_{(1)}) \leq V_c(\tilde{s}_{(2)}) \leq \cdots \leq V_c(\tilde{s}_{(K)})$, and compute:
\begin{equation}
\begin{aligned}
    (C) \int_{\mathcal{P}_s^a} \mathop{\mathbb{E}}_{s' \sim p} [V_c(s')] \, dm'(p) \approx \sum_{i=1}^K V_c(\tilde{s}_{(i)}) \big(m_i - m_{i+1}\big),
\end{aligned}
\label{eq:dual_choquet}
\end{equation}
where $m_i = m(\{1, 2, \ldots, i\})$ represents the capacity of the head sets, computed using the same global fuzzy measure densities $g_k$ but with different subset selection to achieve pessimistic estimation for costs.
The choice of descending order sorting for rewards and ascending order sorting for costs is theoretically grounded in the dual relationship between $m$ and its dual measure $m^*$.

\paragraph{Value Network Updates.}
The value networks are updated through temporal difference learning with Choquet-integrated targets:
\begin{equation}
\begin{aligned}
\mathcal{L}(\theta_r) &= \mathbb{E}_{\tau}\bigl[\bigl(V_{\theta_r}(s_t) - \bigl(r_t + \gamma \cdot \hat{V}_{\theta_r}(s_{t+1})\bigr)\bigr)^2\bigr],\\
\mathcal{L}(\theta_c) &= \mathbb{E}_{\tau}\bigl[\bigl(V_{\theta_c}(s_t) - \bigl(c_t + \gamma \cdot \hat{V}_{\theta_c}(s_{t+1})\bigr)\bigr)^2\bigr],
\end{aligned}
\label{eq:value_updates}
\end{equation}
where $\hat{V}_{\theta_r}$ and $\hat{V}_{\theta_c}$ denote the Choquet-integrated value estimates computed using Equations~\eqref{eq:standard_choquet} and~\eqref{eq:dual_choquet}, respectively.

\paragraph{Fuzzy Network Updates.}
The fuzzy network parameters $\phi$ are updated at a lower frequency than the value networks to ensure stable learning of the uncertainty structure. Given the fuzzy density parameters $g(s_{t+1})$ predicted from next states, the network minimizes the discrepancy between Choquet-integrated predictions and Monte Carlo targets:
\begin{equation}
    L(\phi) = \mathbb{E}_{\tau^{\prime}}\!\Bigl[\bigl(r_t + \gamma \cdot \hat{V}_{\theta_r}(s_{t+1}) - R_t\bigr)^2 +  \bigl(c_t + \gamma \cdot \hat{V}_{\theta_c}(s_{t+1})- C_t\bigr)^2\Bigr],
    \label{eq:loss_fuzzy}
\end{equation}
where $R_t, C_t$ are Monte Carlo returns of reward $r_t$ and cost $c_t$. 

\paragraph{Policy Network Updates.}
Given that fuzzy value estimation implicitly addresses robustness through Choquet integration over multiple perturbations, the robust CMDP problem is solved using a primal-dual approach:
\begin{equation}
    \max_\pi\min_{\lambda_\pi\ge 0} \mathbb{E}_{s\sim d^\pi}\bigl[V_r(s) - \lambda_\pi\bigl(V_c(s)-B\bigr)\bigr],
\label{eq:primal_dual}
\end{equation}
where $V_r$ and $V_c$ represent the robust value estimates obtained through Choquet integration. The policy parameters are optimized to maximize the Lagrangian objective, while the Lagrange multiplier is adjusted to enforce the safety constraint, with specific update rules determined by the underlying safe RL algorithm framework.

\section{Experiments}
\label{main:exp}
\subsection{Experiments Setup}
To fully evaluate the robustness of Fuz-RL in multi-source uncertainties, we conduct experiments on four Safe-Control-Gym \cite{brunke2021safe} tasks: \texttt{Cartpole-Stab}, \texttt{Cartpole-Track}, \texttt{Quadrotor-Stab}, and \texttt{Quadrotor-Track}, as well as four safety-critical control tasks with larger state-action spaces from Safety-Gymnasium \cite{ji2023safety}: \texttt{Safety-PointGoal1-v0}, \texttt{Safety-PointButton1-v0}, \texttt{Safety-PointCircle1-v0}, and \texttt{Safety-PointPush1-v0}.

\textbf{Uncertainty Setting.} During the test phase, we leverage different perturbations provided by the Safe-Control-Gym to consider the following settings on observation, action, and dynamics: 

(1) \textit{Observation uncertainty}. 
We introduce white noise following a normal distribution $\varepsilon \cdot \mathcal{N}(0, I)$ as observation perturbation, where $\varepsilon$ is an adjustable parameter used to set different perturbation intensities. During testing, we vary $\varepsilon$ from $-1$ to $1$ in increments of $0.1$ for Safe-Control-Gym tasks.

(2) \textit{Action uncertainty.} We simulate action uncertainty through an impulse noise disturbance model. Specifically, the perturbed action $\bar{a}_t$ is formulated as $\bar{a}_t = a_t + d_t$, where $d_t$ is defined as:
\begin{equation}
d_t = \begin{cases}
\varepsilon M & t \in [t_{\text{start}}, t_{\text{start}} + D] \\
\varepsilon M\gamma^{(t-t_{\text{start}}-D)} & t > t_{\text{start}} + D \\
0 & \text{otherwise}
\end{cases}
\end{equation}
where $\varepsilon \in [-0.1, 0.1]$ is the magnitude coefficient, $M=10$ is the amplification factor, $t_{start}=20$ is the start step, $D=80$ is the duration, and $\gamma=0.9$ is the decay rate.

(3) \textit{Dynamics uncertainty}. We apply white noise following a normal distribution $\varepsilon \cdot \mathcal{N}(0, I)$ to dynamics parameters—such as pole length and quadrotor mass—where the variation of $\varepsilon$ follows the same scheme as that used for observation perturbation.

(4) \textit{Multi-source uncertainty}. To analyze the non-additive nature of uncertainty perturbations, we simultaneously apply all three uncertainty settings in a coupled configuration during both training and testing. For Safety-Gymnasium tasks, we apply relatively small environmental perturbations during training with $\varepsilon=0.5$. For Fuz-series algorithms, we uniformly adopt the training configuration with $\varepsilon_{\text{base}}=0.1$, $K=10$, and $M=5$.

Since the Safety-Gymnasium benchmarks do not provide built-in uncertainty interfaces, we take observation uncertainty with $\varepsilon \in [0, 0.5]$ as an example for testing.

\textbf{Baselines.} We adopt three safe RL algorithms as the baseline, including the PPO-Lagrangian (PPOL) \cite{ray2019benchmarking}, Conservative Update Policy (CUP) \cite{yang2022cup}, and CVaR-Proximal-Policy-Optimization (CPPO) \cite{ying2021towards}. After integrating the proposed fuzzy-guided framework, we obtain the corresponding Fuz-PPOL, Fuz-CUP, and Fuz-CPPO algorithms. Besides, the current state-of-the-art, robust safe RL, Risk-Averse Model Uncertainty (RAMU) \cite{queeney2024risk}, has also been migrated to our test tasks. All codes of Fuz-RL are implemented based on the SpinningUp \cite{achiam2018spinning}.

\begin{figure}[htbp]
  \centering
  \includegraphics[width=\columnwidth]{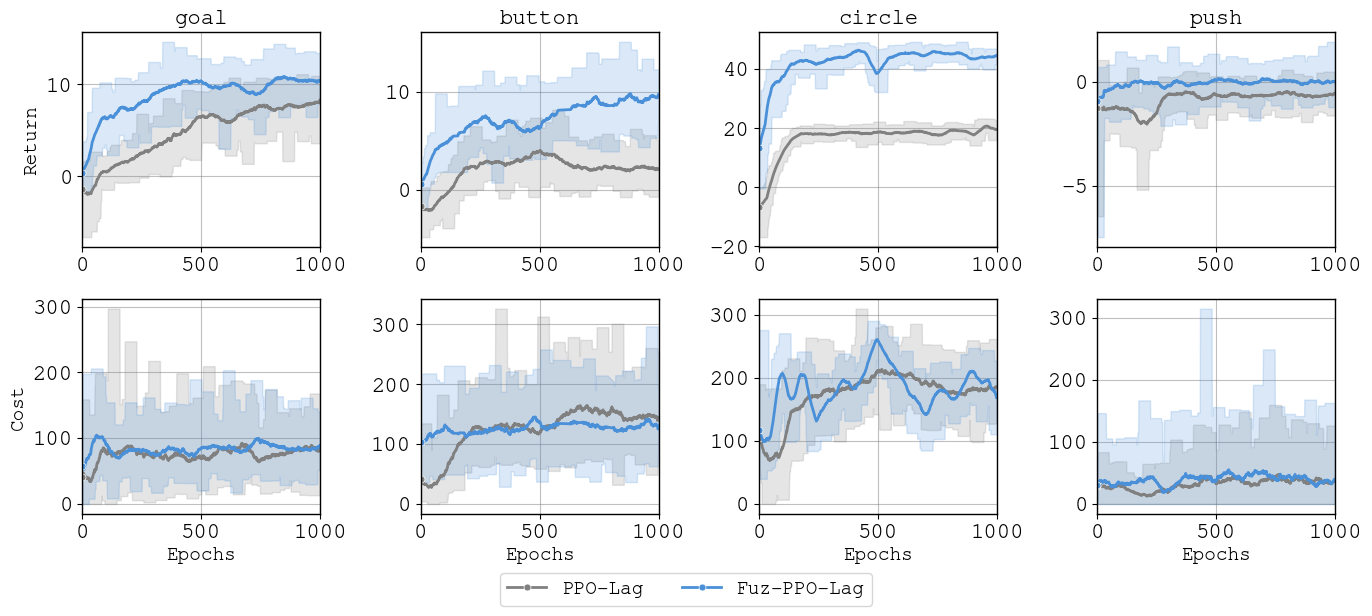} 
\caption{\textbf{Training Dynamics} of PPOLag and Fuz-PPOLag under \texttt{multi-source} uncertainty on Safety-Gymnasium tasks. The perturbation intensity during training is set to $\varepsilon = 0.5$.}
  \label{fig::more_comp}
\end{figure}

\begin{figure}[htbp]
  \centering
  \includegraphics[width=\columnwidth]{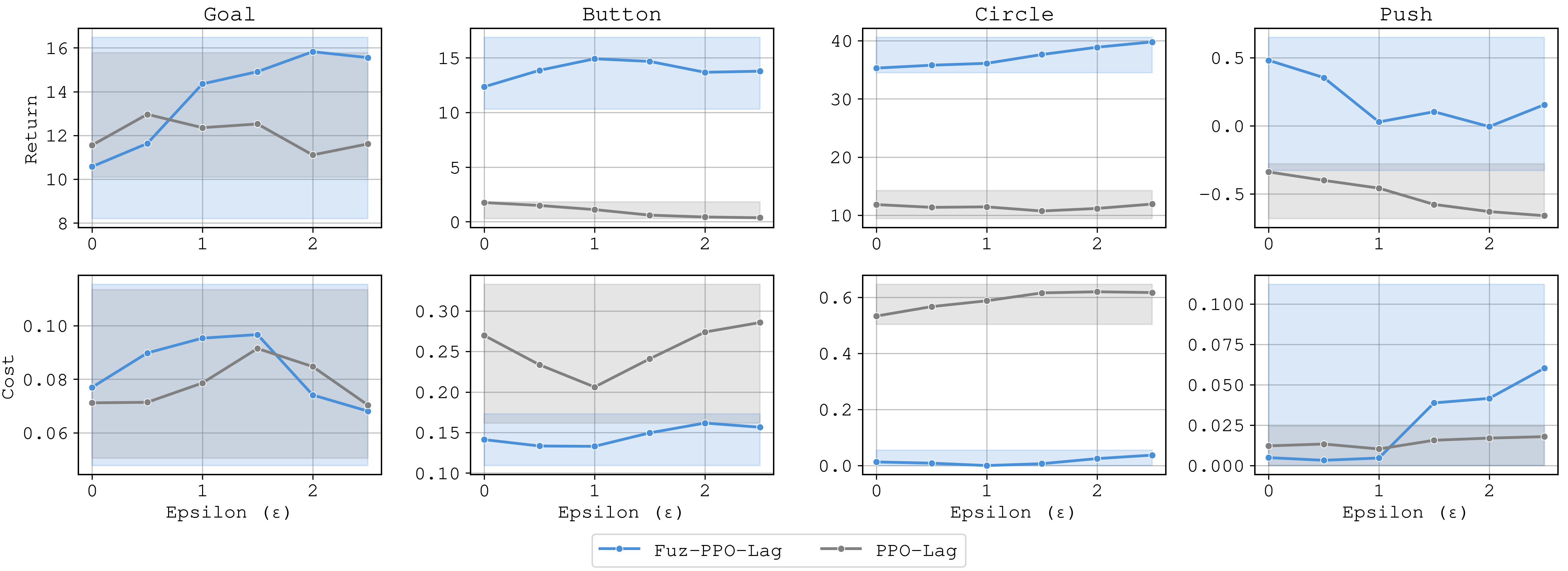} 
  \caption{\textbf{Test Comparison} of PPOLag and Fuz-PPOLag under \texttt{multi-source} uncertainty setting over 5 Episodes and 5 seeds on Safety-Gymnasium tasks. The \texttt{cost\_limit} is set to 0.1.}
  \label{fig::test_point}
\end{figure}

\begin{table}[t]
\caption{\textbf{Detailed evaluation of Safe RL, Fuz-RL, and RAMU }on Safe-Control-Gym tasks with observation, action, dynamics uncertainty. Each value is reported as mean ± standard deviation for 10 episodes and 10 seeds. We shadow the highest AvgRet and lowest AvgRisk for each task.}
\centering
\small
\label{tab_1}
\resizebox{\textwidth}{!}{
\begin{tabular}{@{}cc|cc|cc|cc@{}}
\toprule
\multirow{2}{*}{\textbf{Tasks}} &
  \multirow{2}{*}{\textbf{Methods}} &
  \multicolumn{2}{c|}{\textbf{Observation Uncertainty}} &
  \multicolumn{2}{c|}{\textbf{Action Uncertainty}} &
  \multicolumn{2}{c}{\textbf{Dynamics Uncertainty}} \\ \cmidrule(l){3-8} 
 &
   &
  \multicolumn{1}{c|}{\textbf{AvgRet $\uparrow$}} &
  \textbf{AvgRisk $\downarrow$}&
  \multicolumn{1}{c|}{\textbf{AvgRet $\uparrow$}} &
  \textbf{AvgRisk $\downarrow$}&
  \multicolumn{1}{c|}{\textbf{AvgRet $\uparrow$}} &
  \textbf{AvgRisk $\downarrow$} \\ \midrule
\multicolumn{1}{c|}{\multirow{8}{*}{\shortstack{\textbf{\textit{Cartpole}}\\\textbf{\textit{Stab}}}}} &
  PPOL &
  45 ± 28 &
  0.40 ± 0.14 &
  99 ± 19 &
  0.27 ± 0.14 &
  87 ± 16 &
  0.27 ± 0.09 \\
\multicolumn{1}{c|}{} &
  CUP &
  32 ± 26 &
  0.29 ± 0.11 &
  42 ± 18 &
  0.30 ± 0.13 &
  50 ± 14 &
  0.30 ± 0.13 \\
\multicolumn{1}{c|}{} &
  CPPO &
  41 ± 19 &
  0.47 ± 0.10 &
  76 ± 19 &
  0.36 ± 0.14 &
  77 ± 14 &
  0.31 ± 0.11 \\ \cmidrule(l){2-8} 
\multicolumn{1}{c|}{} &
  RAMU  &
  35 ± 22 &
  0.49 ± 0.16 &
  86 ± 10 &
  0.28 ± 0.07 &
  86 ± 13 &
  \cellcolor{white} \textbf{0.20 ± 0.01} \\ \cmidrule(l){2-8} 
\multicolumn{1}{c|}{} &
  Fuz-PPOL &
  47 ± 25 \textcolor{deepred}{$\uparrow$ 2}&
  0.34 ± 0.11 \textcolor{deepgreen}{$\downarrow$ 0.06}&
  102 ± 17 \textcolor{deepred}{$\uparrow$ 3}&
  0.24 ± 0.12 \textcolor{deepgreen}{$\downarrow$ 0.03}&   
  93 ± 13 \textcolor{deepred}{$\uparrow$ 6}&
  0.22 ± 0.08 \textcolor{deepgreen}{$\downarrow$ 0.05}\\
\multicolumn{1}{c|}{} &
  Fuz-CUP &
  40 ± 19 \textcolor{deepred}{$\uparrow$ 8}&
  \cellcolor{white} \textbf{0.26 ± 0.09} \textcolor{deepgreen}{$\downarrow$ 0.03}&
  87 ± 17 \textcolor{deepred}{$\uparrow$ 45}&
  \cellcolor{white} \textbf{0.23 ± 0.13} \textcolor{deepgreen}{$\downarrow$ 0.07}&
  74 ± 13 \textcolor{deepred}{$\uparrow$ 24}&
  0.25 ± 0.08 \textcolor{deepgreen}{$\downarrow$ 0.05}\\
\multicolumn{1}{c|}{} &
  Fuz-CPPO &
  \cellcolor{white} \textbf{59 ± 25} \textcolor{deepred}{$\uparrow$ 18}&
  0.32 ± 0.10 \textcolor{deepgreen}{$\downarrow$ 0.15}&
  98 ± 17 \textcolor{deepred}{$\uparrow$ 22}&
  0.26 ± 0.13 \textcolor{deepgreen}{$\downarrow$ 0.10}&
  82 ± 14 \textcolor{deepred}{$\uparrow$ 5}&
  0.28 ± 0.09 \textcolor{deepgreen}{$\downarrow$ 0.03}\\ \midrule\midrule
\multicolumn{1}{c|}{\multirow{8}{*}{\shortstack{\textbf{\textit{Cartpole}}\\\textbf{\textit{Track}}}}} &
  PPOL &
  70 ± 16 &
  0.35 ± 0.14 &
  95 ± 12 &
  0.18 ± 0.10 &
  91 ± 12 &
  0.21 ± 0.10 \\
\multicolumn{1}{c|}{} &
  CUP &
  59 ± 12 &
  0.28 ± 0.11 &
  73 ± 9 &
  0.20 ± 0.10 &
  77 ± 11 &
  0.18 ± 0.08 \\
\multicolumn{1}{c|}{} &
  CPPO &
  87 ± 17 &
  0.42 ± 0.11 &
  113 ± 16 &
  0.26 ± 0.11 &
  106 ± 12 &
  0.32 ± 0.09 \\ \cmidrule(l){2-8} 
\multicolumn{1}{c|}{} &
  RAMU  &
  67 ± 31 &
  \cellcolor{white} \textbf{0.22 ± 0.13} &
  70 ± 34 &
  0.18 ± 0.10 &
  79 ± 31 &
  0.14 ± 0.07 \\ \cmidrule(l){2-8} 
\multicolumn{1}{c|}{} &
  Fuz-PPOL &
  91 ± 20 \textcolor{deepred}{$\uparrow$ 21}&
  0.38 ± 0.16 \textcolor{deepred}{$\uparrow$ 0.03}&
  \cellcolor{white} \textbf{120 ± 14} \textcolor{deepred}{$\uparrow$ 25}&
  \cellcolor{white} \textbf{0.18 ± 0.09} \textcolor{deepgreen}{$\downarrow$ 0.00}&
  \cellcolor{white} \textbf{112 ± 14} \textcolor{deepred}{$\uparrow$ 21}&
  0.22 ± 0.09 \textcolor{deepred}{$\uparrow$ 0.01}\\
\multicolumn{1}{c|}{} &
  Fuz-CUP &
  61 ± 31 \textcolor{deepred}{$\uparrow$ 2}&
  0.24 ± 0.16 \textcolor{deepgreen}{$\downarrow$ 0.04}&
  106 ± 16 \textcolor{deepred}{$\uparrow$ 33}&
  0.18 ± 0.11 \textcolor{deepgreen}{$\downarrow$ 0.02}&
  100 ± 12 \textcolor{deepred}{$\uparrow$ 23}&
  \cellcolor{white} \textbf{0.14 ± 0.07} \textcolor{deepgreen}{$\downarrow$ 0.04}\\
\multicolumn{1}{c|}{} &
  Fuz-CPPO &
  \cellcolor{white} \textbf{93 ± 10} \textcolor{deepred}{$\uparrow$ 6}&
  0.31 ± 0.10 \textcolor{deepgreen}{$\downarrow$ 0.11}&
  107 ± 14 \textcolor{deepgreen}{$\downarrow$ 6}&
  0.21 ± 0.09 \textcolor{deepgreen}{$\downarrow$ 0.05}&
  107 ± 12 \textcolor{deepred}{$\uparrow$ 1}&
  0.23 ± 0.08 \textcolor{deepgreen}{$\downarrow$ 0.09}\\ \midrule\midrule
  \multicolumn{1}{c|}{\multirow{8}{*}{\shortstack{\textbf{\textit{Quadrotor}}\\\textbf{\textit{Stab}}}}} &
  PPOL &
  \cellcolor{white} \textbf{164 ± 18} &
  0.13 ± 0.06 &
  58 ± 55 &
  0.52 ± 0.19 &
  142 ± 34 &
  0.28 ± 0.11 \\
\multicolumn{1}{c|}{} &
  CUP &
  139 ± 7 &
  0.05 ± 0.02 &
  58 ± 28 &
  0.56 ± 0.17 &
  117 ± 26 &
  0.14 ± 0.10 \\
\multicolumn{1}{c|}{} &
  CPPO &
  131 ± 14 &
  0.09 ± 0.02 &
  54 ± 50 &
  0.36 ± 0.11 &
  117 ± 36 & 
  0.17 ± 0.13 \\ \cmidrule(l){2-8} 
\multicolumn{1}{c|}{} &
  RAMU  &
  146 ± 16 &
  0.06 ± 0.04 &
  28 ± 33 &
  0.59 ± 0.20 &
  120 ± 39 &
  0.17 ± 0.16 \\ \cmidrule(l){2-8} 
\multicolumn{1}{c|}{} &
  Fuz-PPOL &
  161 ± 22 \textcolor{deepgreen}{$\downarrow$ 3}&
  0.07 ± 0.05 \textcolor{deepgreen}{$\downarrow$ 0.06}&
  67 ± 58 \textcolor{deepred}{$\uparrow$ 9}&
  0.43 ± 0.19 \textcolor{deepgreen}{$\downarrow$ 0.09}&
  \cellcolor{white} \textbf{156 ± 28} \textcolor{deepred}{$\uparrow$ 14}&
  0.13 ± 0.09 \textcolor{deepgreen}{$\downarrow$ 0.15}\\
\multicolumn{1}{c|}{} &
  Fuz-CUP &
  142 ± 15 \textcolor{deepred}{$\uparrow$ 3}&
  \cellcolor{white} \textbf{0.05 ± 0.02} \textcolor{deepgreen}{$\downarrow$ 0.00}&
  \cellcolor{white} \textbf{94 ± 24} \textcolor{deepred}{$\uparrow$ 36}&
  \cellcolor{white} \textbf{0.39 ± 0.10} \textcolor{deepgreen}{$\downarrow$ 0.17}&
  139 ± 23 \textcolor{deepred}{$\uparrow$ 22}&
  0.14 ± 0.09 \textcolor{deepgreen}{$\downarrow$ 0.00}\\
\multicolumn{1}{c|}{} &
  Fuz-CPPO &
  156 ± 11 \textcolor{deepred}{$\uparrow$ 25}&
  0.07 ± 0.03 \textcolor{deepgreen}{$\downarrow$ 0.02}&
  87 ± 31 \textcolor{deepred}{$\uparrow$ 33}&
  0.33 ± 0.10 \textcolor{deepgreen}{$\downarrow$ 0.03}&
  130 ± 44 \textcolor{deepred}{$\uparrow$ 13}&
  \cellcolor{white} \textbf{0.09 ± 0.12} \textcolor{deepgreen}{$\downarrow$ 0.08}\\ \midrule\midrule
\multicolumn{1}{c|}{\multirow{8}{*}{\shortstack{\textbf{\textit{Quadrotor}}\\\textbf{\textit{Track}}}}} &
  PPOL & 
  \cellcolor{white} \textbf{218 ± 8} & 
  0.48 ± 0.04 & 
  104 ± 79 & 
  0.81 ± 0.12 & 
  \cellcolor{white} \textbf{203 ± 24} & 
  0.58 ± 0.12 \\
\multicolumn{1}{c|}{} &
  CUP &
  151 ± 14 &
  0.04 ± 0.03 &
  67 ± 50 &
  0.37 ± 0.13 &
  152 ± 13 &
  0.12 ± 0.11 \\
\multicolumn{1}{c|}{} &
  CPPO &
  152 ± 16 &
  0.77 ± 0.04 &
  76 ± 60 &
  0.72 ± 0.11 &
  124 ± 33 &
  0.73 ± 0.08 \\ \cmidrule(l){2-8} 
\multicolumn{1}{c|}{} &
  RAMU  &
  176 ± 12 &
  0.05 ± 0.03 &
  61 ± 50 &
  0.53 ± 0.18 &
  123 ± 48 &
  0.31 ± 0.20 \\ \cmidrule(l){2-8} 
\multicolumn{1}{c|}{} &
  Fuz-PPOL &
  200 ± 6 \textcolor{deepgreen}{$\downarrow$ 18}&
  0.28 ± 0.05 \textcolor{deepgreen}{$\downarrow$ 0.20}&
  99 ± 67 \textcolor{deepgreen}{$\downarrow$ 5}&
  0.64 ± 0.17 \textcolor{deepgreen}{$\downarrow$ 0.17}&
  194 ± 16 \textcolor{deepgreen}{$\downarrow$ 9}&
  0.38 ± 0.15 \textcolor{deepgreen}{$\downarrow$ 0.20}\\
\multicolumn{1}{c|}{} &
  Fuz-CUP &
  175 ± 9 \textcolor{deepred}{$\uparrow$ 24}&
  \cellcolor{white} \textbf{0.04 ± 0.02} \textcolor{deepgreen}{$\downarrow$ 0.00}&
  \cellcolor{white} \textbf{112 ± 22} \textcolor{deepred}{$\uparrow$ 45}&
  \cellcolor{white} \textbf{0.33 ± 0.09} \textcolor{deepgreen}{$\downarrow$ 0.04}&
  168 ± 13 \textcolor{deepred}{$\uparrow$ 16}&
  \cellcolor{white} \textbf{0.12 ± 0.10} \textcolor{deepgreen}{$\downarrow$ 0.00}\\
\multicolumn{1}{c|}{} &
  Fuz-CPPO &
  168 ± 14 \textcolor{deepred}{$\uparrow$ 16}&
  0.47 ± 0.05 \textcolor{deepgreen}{$\downarrow$ 0.30}&
  79 ± 53 \textcolor{deepred}{$\uparrow$ 3}&
  0.67 ± 0.09 \textcolor{deepgreen}{$\downarrow$ 0.05}&
  151 ± 23 \textcolor{deepred}{$\uparrow$ 27}&
  0.59 ± 0.12 \textcolor{deepgreen}{$\downarrow$ 0.14}\\ \bottomrule
\end{tabular}
}
\end{table}

\subsection{Robustness Assessment in Safe Control Tasks}
For Safe-Control-Gym tasks, we trained the three safe RL baseline algorithms along with their corresponding Fuz-RL and RAMU variants under the same configuration. The specific parameter settings and more detailed results are presented in Appendix \ref{appendix:Hyper_parameters}. For Safety-Gymnasium tasks, we use PPOLag and Fuz-PPOLag as representative examples for experimental evaluation, with training and testing dynamics shown in Figure~\ref{fig::more_comp} and Figure~\ref{fig::test_point}, respectively.

We evaluate the models from two features ``\textit{AvgRet}'' and ``\textit{AvgRisk}'', which represent the average episodic return and the \textit{proportion of constraint violations}, respectively, for each task over 10 episodes across 10 seeds. 

\paragraph{Comparison between Safe RL and Fuz-RL.} As depicted in Table (\ref{tab_1}) and Appendix \ref{appendix:results}, Fuz-RL demonstrates superior safety in \textbf{94.9\%} cases and robust performance in \textbf{88.9\%} tasks across various uncertainty settings. Taking Fuz-CUP as an example, it achieves 61.4\% higher \textit{AvgRet} and 16.7\% lower \textit{AvgRisk} than CUP in CartPole-Stab task. Moreover, Fuz-RL shows better uncertainty resistance with slower performance degradation than Safe RL. The variance reduction in \textit{AvgRet} is 20.7\%, 9.9\%, and 8.6\%, while in \textit{AvgRisk} is 13.2\%, 7.1\%, and 22.6\% respectively.

\paragraph{Comparison between Fuz-RL and RAMU.} In the 36 Fuz-RL-based experiments listed in Table (\ref{tab_1}), Fuz-RL surpasses RAMU in achieving higher \textit{AvgRet} in \textbf{83.3\% }of the tasks. Furthermore, Fuz-RL exhibits lower \textit{AvgRisk} compared to RAMU in 52.8\% of them. It is important to highlight that \textit{the lower average episodic risk} of RAMU is achieved by \textit{compromising average episodic rewards}, especially in cases of actions affected by impulse disturbances, as shown in the ``Action Uncertainty'' section of Table (\ref{tab_1}). Fuz-RL consistently outperforms RAMU in all ``Action Uncertainty'' tasks.

\begin{figure}[t]
  \centering
  \includegraphics[width=\columnwidth]{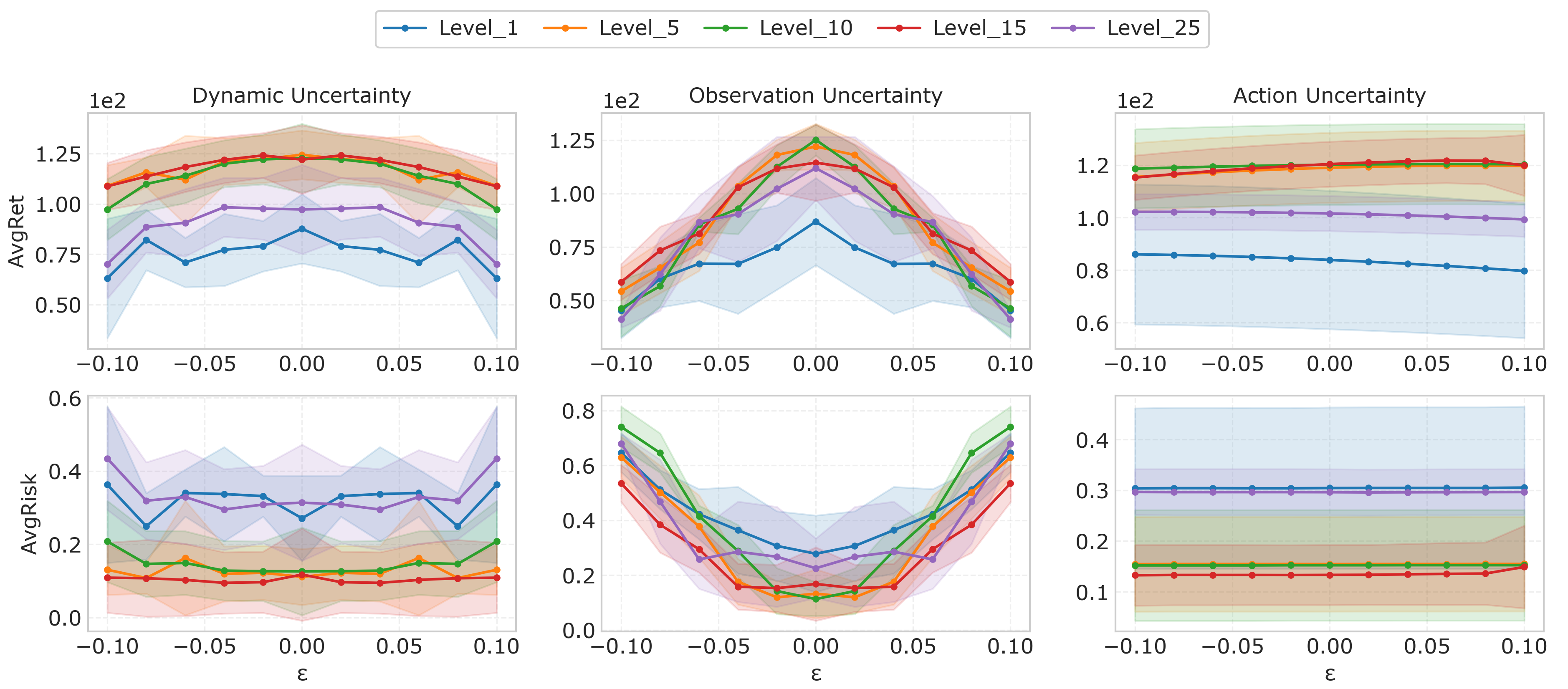} 
  \caption{Ablation study of the uncertainty level $K$.}
  \label{fig:ablation_level}
\end{figure}

\paragraph{Ablation Studies of Fuz-RL.} We conduct ablation studies to examine how different uncertainty levels $K$ affect Fuz-RL's performance. As illustrated in Figure \ref{fig:ablation_level}, setting uncertainty level ($K=1$) proves insufficient and leads to high episodic risk, while excessive levels ($K=25$) complicate training and reduce rewards. The optimal performance emerges at intermediate levels ($K=5$ to $K=15$), where agents achieve higher rewards while maintaining lower and more stable risk across all three uncertainty types.

\section{Conclusion and Future Work} \label{conclu}
In this paper, we propose Fuz-RL, a novel robustness enhancement framework that seamlessly integrates fuzzy logic into safe reinforcement learning. We develop a novel fuzzy Bellman operator incorporating Choquet integrals, enabling robust decision-making without solving computationally expensive min-max optimization problems. Theoretically, we establish the equivalence between our fuzzy robust safe RL formulation and distributionally robust safe RL. Extensive experiments on the safe-control-gym and safety-gymnasium benchmarks demonstrate that Fuz-RL significantly outperforms state-of-the-art safe and robust RL algorithms across various uncertainty types, achieving superior performance in both reward optimization and safety constraint satisfaction under diverse perturbation scenarios.

While Fuz-RL demonstrates promising results, it faces limited scalability in high-dimensional state spaces. Future work will focus on developing more efficient uncertainty modeling techniques and extending the framework to handle non-stationary uncertainty distributions through adaptive learning mechanisms.

\section*{Acknowledgements}
This work was supported in part by the Smart Grid-National Science and Technology Major Project (2025ZD0803600, 2025ZD0803604), the National Natural Science Foundation of China under Grants 72571007, and the Natural Science Foundation of Zhejiang Province under Grant LZ23F030009.
\bibliographystyle{plain}
\bibliography{example_paper}

\newpage
\appendix

\section{Appendix / Theorems and Proofs}
\label{app:proofs}

\begin{theorem}[$\gamma$-contraction of Fuzzy Bellman Operator]
\label{app_thm:gamma_contraction}
For any $V_1, V_2 \in \mathcal{B}(\mathcal{S})$,
\[
\|\mathcal{F}(V_1) - \mathcal{F}(V_2)\|_{\infty} \leq \gamma \|V_1 - V_2\|_{\infty}.
\]
\end{theorem}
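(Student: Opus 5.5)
The argument reduces the claim to two elementary properties of the Choquet integral: monotonicity in the integrand and translation invariance. Fix $s\in\mathcal{S}$ and write $\delta := \|V_1 - V_2\|_\infty$. The rewards $r(s,a)$ are identical in $\mathcal{F}(V_1)(s)$ and $\mathcal{F}(V_2)(s)$, so they cancel. Using linearity of $\mathbb{E}_{a\sim\pi}$,
\[
|\mathcal{F}(V_1)(s)-\mathcal{F}(V_2)(s)| \le \gamma\, \mathbb{E}_{a\sim\pi}\Big| (C)\!\int_{\mathcal{P}_s^a} f_1\, dm - (C)\!\int_{\mathcal{P}_s^a} f_2\, dm \Big|,
\]
where $f_i(p) := \mathbb{E}_{s'\sim p}[V_i(s')]$. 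It therefore suffices to show that, for each $(s,a)$, the Choquet integral is $1$-Lipschitz in the sup norm of its integrand, and that $\|f_1-f_2\|_\infty \le \delta$.

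The second fact is immediate. For every $p\in\mathcal{P}_s^a$,
\[
|f_1(p)-f_2(p)| = \big|\mathbb{E}_{s'\sim p}[V_1(s')-V_2(s')]\big| \le \delta,
\]
because $p$ is a probability distribution. Hence $f_2-\delta \le f_1 \le f_2+\delta$ pointwise on $\mathcal{P}_s^a$.

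For the first fact, I would use two standard properties of the Choquet integral with respect to a monotone normalized capacity $m$ (Definition \ref{def:fuzzy_measure}).
\begin{enumerate}
\item[(i)] \emph{Monotonicity:} if $f\le g$, then $(C)\!\int f\,dm \le (C)\!\int g\,dm$. This follows because each level set satisfies $\{f>t\}\subseteq\{g>t\}$, and $m$ is monotone.
\item[(ii)] \emph{Translation invariance:} $(C)\!\int (f+c)\,dm = (C)\!\int f\,dm + c\cdot m(\Omega) = (C)\!\int f\,dm + c$. This follows from the layer-cake definition together with $m(\Omega)=1$.
\end{enumerate}
Alternatively, since the paper takes $m$ to be convex, both properties follow at once from Lemma \ref{lemma:choquet_pessimism}. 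The integral equals $\min_{P\in\mathrm{core}(m)}\mathbb{E}_P[f]$, a minimum of expectations under probability measures. Each such expectation is monotone and shifts by exactly $c$, and both properties survive taking the minimum, since $|\min_P a_P - \min_P b_P|\le \sup_P|a_P-b_P|$. Applying (i) and (ii) to $f_2-\delta\le f_1\le f_2+\delta$ gives
\[
\Big|(C)\!\int f_1\,dm - (C)\!\int f_2\,dm\Big| \le \delta .
\]

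Combining the two facts, for every $s$,
\[
|\mathcal{F}(V_1)(s)-\mathcal{F}(V_2)(s)| \le \gamma\,\mathbb{E}_{a\sim\pi}[\delta] = \gamma\delta .
\]
Taking the supremum over $s$ gives the claim.

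The main obstacle is justifying the $1$-Lipschitz property of the Choquet integral rigorously. This requires care on the continuum $\mathcal{P}_s^a$ when the integrand takes negative values: one must use the asymmetric (signed) Choquet definition and check that translation invariance still holds there. A further subtlety is that $m$ is defined on the $K$ levels $\mathcal{P}_{s,k}^a$ rather than on individual $p$. I would handle this by viewing $m$ as a capacity on the $\sigma$-algebra generated by the levels, so that the integrand is effectively evaluated level-wise, or by going through the core representation, which avoids these measurability details. The remaining steps are routine.
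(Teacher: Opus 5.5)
Your proof is correct, and it follows a different and sounder route than the paper's.

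The paper subtracts inside the Choquet integral. It writes $|\mathcal{F}(V_1)(s)-\mathcal{F}(V_2)(s)|=\gamma\,\bigl|\mathbb{E}_{a\sim\pi}\bigl[(C)\!\int \mathbb{E}_{s'\sim p}[V_1(s')-V_2(s')]\,dm(p)\bigr]\bigr|$. It then bounds this by $\gamma\,\mathbb{E}_{a\sim\pi}\bigl[(C)\!\int \mathbb{E}_{s'\sim p}|V_1(s')-V_2(s')|\,dm(p)\bigr]$, and finally by $\gamma\|V_1-V_2\|_\infty\,(C)\!\int dm=\gamma\|V_1-V_2\|_\infty$.

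The first two steps are properties of additive integrals. The Choquet integral is only comonotonically additive, so for $\lambda\neq 0$ neither $(C)\!\int f_1\,dm-(C)\!\int f_2\,dm=(C)\!\int (f_1-f_2)\,dm$ nor $|(C)\!\int g\,dm|\le (C)\!\int |g|\,dm$ holds in general. For a concrete case, take $K=2$ with densities $g_1=g_2=0.1$, which gives $\lambda=80$ and a convex measure. Writing functions by their values on the two levels:
\begin{itemize}
\item $f_1=(1,0)$ and $f_2=(0,1)$ give $(C)\!\int f_1\,dm-(C)\!\int f_2\,dm=0$, while $(C)\!\int (f_1-f_2)\,dm=-0.8$;
\item $g=(0,-1)$ gives $|(C)\!\int g\,dm|=0.9>0.1=(C)\!\int|g|\,dm$.
\end{itemize}

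Your argument uses only monotonicity and translation invariance. Equivalently, it uses the fact that a minimum of expectations over $\mathrm{core}(m)$ is $1$-Lipschitz in the sup norm. These are exactly the properties that survive non-additivity, so the sandwich $f_2-\delta\le f_1\le f_2+\delta$ gives the bound legitimately.

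The paper's chain is shorter, but its steps are justified only for additive $m$ ($\lambda=0$). Your version proves the stated contraction for any normalized monotone capacity under the asymmetric Choquet integral, with no convexity needed. Through the core representation it also covers the convex case the paper assumes.

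You also flag two points the paper leaves implicit: the need for the asymmetric (signed) definition to get translation invariance, and the fact that $m$ is defined on the $K$ levels rather than on individual $p$.
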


\begin{proof}
For any two value functions $V_1$ and $V_2$, and any state $s$:
\begin{align*}
& |\mathcal{F}(V_1)(s) - \mathcal{F}(V_2)(s)| \\
&= \left|\mathbb{E}_{a\sim \pi} \left[\gamma (C)\int_{\mathcal{P}_s^a} \mathbb{E}_{s' \sim p} [V_1(s') - V_2(s')] dm(p)\right]\right| \\
&\leq \gamma \mathbb{E}_{a\sim \pi} \left[(C)\int_{\mathcal{P}_s^a} \mathbb{E}_{s' \sim p} |V_1(s') - V_2(s')| dm(p)\right] \\
&\leq \gamma \mathbb{E}_{a\sim \pi} \left[\|V_1 - V_2\|_{\infty} (C)\int_{\mathcal{P}_s^a} dm(p)\right] \\
&= \gamma \|V_1 - V_2\|_{\infty} \mathbb{E}_{a\sim \pi} \left[(C)\int_{\mathcal{P}_s^a} dm(p)\right] \\
&= \gamma \|V_1 - V_2\|_{\infty}
\end{align*}
Here, we use the fact that $(C)\int_{\mathcal{P}_s^a} dm(p) = 1$ for all $s$ and $a$, as $m$ is a normalized fuzzy measure. Taking the supremum over all states $s$ yields the result.
\end{proof}

\begin{theorem}[Convergence of Fuzzy Bellman Operator]
\label{app_thm:convergence}
Let $V^0 \in \mathcal{B}(\mathcal{S})$ be an initial value function and $V^{n+1} = \mathcal{F}(V^n)$. Then $V^n$ converges to a unique fixed point $V^*$ satisfying $V^*=\mathcal{F}(V^*)$ with geometric rate $\|V^n - V^*\|_\infty \leq \gamma^n \|V^0 - V^*\|_\infty$.
\end{theorem}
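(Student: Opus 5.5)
This is an application of the Banach fixed-point theorem, using the $\gamma$-contraction property already established in Theorem~\ref{app_thm:gamma_contraction}.

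\textbf{Completeness.} The space $\mathcal{B}(\mathcal{S})$ of bounded measurable functions under $\|\cdot\|_\infty$ is a Banach space. A uniform limit of bounded measurable functions is again bounded and measurable; when $\mathcal{S}$ is finite this is trivial.

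\textbf{Well-definedness.} $\mathcal{F}$ maps $\mathcal{B}(\mathcal{S})$ into itself. Since $m$ is monotone and normalized, the Choquet integral of a function bounded by $\|V\|_\infty$ is itself bounded by $\|V\|_\infty$. Because $r$ is bounded, this gives
\[
\|\mathcal{F}(V)\|_\infty \le \|r\|_\infty + \gamma\|V\|_\infty .
\]

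\textbf{Cauchy property.} Iterating the contraction gives $\|V^{n+1}-V^n\|_\infty \le \gamma^n \|V^1-V^0\|_\infty$. Summing the geometric series, for $n<k$,
\[
\|V^k - V^n\|_\infty \le \frac{\gamma^n}{1-\gamma}\|V^1-V^0\|_\infty .
\]
Since $\gamma<1$, the sequence $(V^n)$ is Cauchy, so by completeness it converges to some $V^*\in\mathcal{B}(\mathcal{S})$.

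\textbf{Fixed point.} The contraction property makes $\mathcal{F}$ Lipschitz and hence continuous. Therefore
\[
\mathcal{F}(V^*) = \lim_n \mathcal{F}(V^n) = \lim_n V^{n+1} = V^* .
\]

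\textbf{Uniqueness.} If $V^*$ and $W^*$ are both fixed points, then $\|V^*-W^*\|_\infty = \|\mathcal{F}(V^*)-\mathcal{F}(W^*)\|_\infty \le \gamma\|V^*-W^*\|_\infty$. Since $\gamma<1$, this forces $\|V^*-W^*\|_\infty = 0$.

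\textbf{Geometric rate.} Using $V^* = \mathcal{F}(V^*)$ and the contraction,
\[
\|V^n - V^*\|_\infty = \|\mathcal{F}(V^{n-1}) - \mathcal{F}(V^*)\|_\infty \le \gamma \|V^{n-1}-V^*\|_\infty .
\]
Inducting on $n$ gives $\|V^n - V^*\|_\infty \le \gamma^n\|V^0-V^*\|_\infty$.

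\textbf{Main obstacle.} The only non-routine point is confirming that $\mathcal{F}$ genuinely maps into $\mathcal{B}(\mathcal{S})$, which needs measurability of the Choquet-integrated term. For finite $\mathcal{S}$, as assumed in the preliminaries, this is immediate, and I would simply invoke that assumption.
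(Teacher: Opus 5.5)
Your proposal is correct and takes the same route as the paper: the $\gamma$-contraction from Theorem~\ref{app_thm:gamma_contraction} combined with the Banach fixed-point argument, with the rate obtained by iterating $\|V^{n}-V^*\|_\infty \le \gamma\|V^{n-1}-V^*\|_\infty$. You also check two things the paper takes for granted, namely that $\mathcal{B}(\mathcal{S})$ is complete and that $\mathcal{F}$ maps it into itself, and your direct induction for the rate is cleaner than the paper's detour through $\|V^1-V^0\|_\infty$.
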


\begin{proof}
By Theorem \ref{thm:gamma_contraction}, $\mathcal{F}$ is a contraction mapping. The Banach Fixed Point Theorem guarantees the existence of a unique fixed point $V^*$ such that $V^* = \mathcal{F}(V^*)$. Moreover, for any initial $V^0$, the sequence $\{V^n\}_{n=0}^{\infty}$ defined by $V^{n+1} = \mathcal{F}(V^n)$ converges to $V^*$:
\begin{equation*}
\|V^n - V^*\|_{\infty} \leq \gamma^n \|V^0 - V^*\|_{\infty} \to 0 \text{ as } n \to \infty
\end{equation*}
This convergence follows directly from the contraction property:
\begin{align*}
\|V^{n+1} - V^*\|_{\infty} &= \|\mathcal{F}(V^n) - \mathcal{F}(V^*)\|_{\infty} \\
&\leq \gamma \|V^n - V^*\|_{\infty} \\
&\leq \gamma^n \|V^1 - V^*\|_{\infty} \\
&\leq \gamma^n \|V^1 - V^0\|_{\infty} + \gamma^n \|V^0 - V^*\|_{\infty}
\end{align*}
As $n \to \infty$, both terms approach zero due to $\gamma < 1$, proving the convergence.
\end{proof}

\begin{lemma}[Core Duality of Convex Fuzzy Measures]
\label{lem:core_duality}
Let $m$ be a convex fuzzy measure on $\mathcal{P}_s^a$ with dual measure defined by:
\[
m'(A) := 1 - m(\mathcal{P}_s^a \setminus A), \quad \forall A \subseteq \mathcal{P}_s^a.
\]
Then the cores satisfy $\mathrm{core}(m') = \mathrm{core}(m)$, and for any bounded measurable function $f: \mathcal{P}_s^a \to \mathbb{R}$:
\[
(C)\int_{\mathcal{P}_s^a} f \, dm' = \max_{P \in \mathrm{core}(m)} \mathbb{E}_P[f].
\]
\end{lemma}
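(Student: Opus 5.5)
The plan is to reduce everything to Lemma~\ref{lemma:choquet_pessimism} via two ingredients: the identification of the two cores, and the standard \emph{asymmetry identity} for Choquet integrals,
\[
(C)\int f\,dm' \;=\; -\,(C)\int (-f)\,dm .
\]
With these, the conclusion follows in one line:
\[
(C)\int f\,dm' \;=\; -\min_{P\in\mathrm{core}(m)}\mathbb{E}_P[-f] \;=\; \max_{P\in\mathrm{core}(m)}\mathbb{E}_P[f].
\]

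\textbf{Step 1 (cores).} Here I must fix the meaning of the core of the dual measure. Take any probability $P$ with $P(A)\ge m(A)$ for all $A$. Applying this to $\mathcal{P}_s^a\setminus A$ gives
\[
P(A)=1-P(\mathcal{P}_s^a\setminus A)\le 1-m(\mathcal{P}_s^a\setminus A)=m'(A).
\]
Hence $\mathrm{core}(m)=\{P:\ m\le P\le m'\}$. This description is symmetric under $m\leftrightarrow m'$, because $m''=m$. I will therefore read $\mathrm{core}(m')$ as the set of additive measures sandwiched between $m'$ and its dual, i.e. those dominated by $m'$. With that convention the equality $\mathrm{core}(m')=\mathrm{core}(m)$ is immediate. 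I will state this explicitly, since with the literal ``$P\ge m'$'' definition the set would instead be the anti-core. Convexity of $m$ is not needed for this step. It is needed only so that Lemma~\ref{lemma:choquet_pessimism} applies, which gives nonemptiness of the core and attainment of the minimum (Schmeidler/Shapley).

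\textbf{Step 2 (asymmetry identity).} I will use the layer-cake definition
\[
(C)\int f\,dm=\int_0^\infty m(\{f\ge t\})\,dt+\int_{-\infty}^0\bigl(m(\{f\ge t\})-1\bigr)\,dt,
\]
which is finite for bounded $f$. Substitute $g=-f$ and note that $\{-f\ge t\}$ is the complement of $\{f>-t\}$, so
\[
m(\{-f\ge t\})=1-m'(\{f>-t\}).
\]
The change of variables $u=-t$ swaps the positive and negative half-lines. This yields
\[
-(C)\int(-f)\,dm=\int_0^\infty m'(\{f>u\})\,du+\int_{-\infty}^0\bigl(m'(\{f>u\})-1\bigr)\,du.
\]
The only delicate point is replacing $\{f>u\}$ by $\{f\ge u\}$. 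I would handle it by monotonicity: $u\mapsto m'(\{f>u\})$ and $u\mapsto m'(\{f\ge u\})$ are monotone functions that coincide except at countably many discontinuity points. In the paper's finite setting with $K$ levels, $f$ takes finitely many values, and the identity can alternatively be checked directly on the sorted sum used in the definition of the discrete Choquet integral.

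\textbf{Step 3 (combine).} Apply Lemma~\ref{lemma:choquet_pessimism} to $-f$, which is bounded and measurable, and use $\min(-\,\cdot)=-\max(\cdot)$. This gives the displayed formula in the first paragraph.

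I expect the main obstacle to be Step 1, namely making the core statement precise, rather than any computation. The level-set boundary issue in Step 2 is routine but must be mentioned.
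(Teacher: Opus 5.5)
Your proof is correct, and it differs from the paper's in both parts. For the core identity, the paper writes a chain of equivalences from $P(A)\ge m(A)$ to $P(A)\ge m'(A)$, but its first step reverses an inequality. Complementing $P\ge m$ gives $1-P(\mathcal{P}_s^a\setminus A)\le 1-m(\mathcal{P}_s^a\setminus A)$, i.e.\ $P(A)\le m'(A)$, exactly as you computed. Your caveat is therefore essential rather than pedantic. Under the literal definition, $\{P:P\ge m'\}$ is the anti-core $\{P:P\le m\}$. For convex $m$ this set forces $P=m$, so it is empty unless $m$ is itself additive. For example, on a two-point space with $m(\{1\})=m(\{2\})=0.2$, $\mathrm{core}(m)=\{P:0.2\le P(\{1\})\le 0.8\}$, whereas $P\ge m'$ would need $P(\{1\}),P(\{2\})\ge 0.8$. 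You read the core of the concave capacity $m'$ as $\{P:P\le m'\}$, the usual convention for upper probabilities and cost games, and that convention is what makes the first claim true. For the max formula, the paper cites as a black box that a Choquet integral against a concave capacity equals the maximum over its core, and then substitutes the faulty core identity. You instead derive the formula from the paper's own min-representation applied to $-f$, through the conjugation identity $(C)\int f\,dm'=-(C)\int(-f)\,dm$; this is the standard proof of that black box. Your route is self-contained and makes the max formula depend only on $\mathrm{core}(m)$, so it holds however $\mathrm{core}(m')$ is defined. The paper's route is shorter but, as written, maximizes over a set that is empty for every non-additive convex $m$. As you note, for a general convex $m$ you need Schmeidler's theorem rather than the $\lambda$-measure version stated in the paper's preliminaries.
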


\begin{proof}
\textbf{Part 1: Core Equivalence}.  
For any convex fuzzy measure $m$, its dual $m'$ is concave. By the duality theorem for balanced fuzzy measures \cite{grabisch2016set}:
\begin{align*}
P \in \mathrm{core}(m) &\iff P(A) \geq m(A),\ \forall A \subseteq \mathcal{P}_s^a \\
&\iff 1 - P(\mathcal{P}_s^a \setminus A) \geq 1 - m(\mathcal{P}_s^a \setminus A),\ \forall A \subseteq \mathcal{P}_s^a \\
&\iff P(A) \geq m'(A),\ \forall A \subseteq \mathcal{P}_s^a \\
&\iff P \in \mathrm{core}(m').
\end{align*}

\textbf{Part 2: Maximum Representation}.  
For the concave measure $m'$, the Choquet integral attains its maximum over the core:
\[
(C)\int f \, dm' = \max_{P \in \mathrm{core}(m')} \mathbb{E}_P[f] = \max_{P \in \mathrm{core}(m)} \mathbb{E}_P[f],
\]
where the last equality follows from $\mathrm{core}(m') = \mathrm{core}(m)$. \qedhere
\end{proof}

\begin{theorem}[Equivalence Theorem]
\label{thm:equiv_dual}
Given a robust CMDP:
\[
\begin{aligned}
    \max_{\pi} \min_{p\in\mathcal{P}} &\mathbb{E}_{\tau\sim(\pi,p)}\left[\sum_{t=0}^{\infty}\gamma^t r(s_t,a_t)\right] \\ 
    \text{s.t.} \quad \max_{p\in\mathcal{P}} &\mathbb{E}_{\tau\sim(\pi,p)}\left[\sum_{t=0}^{\infty}\gamma^t c(s_t,a_t)\right] \leq B,
\end{aligned}
\]
let \( m \) be a convex $\lambda$-fuzzy measure on \( \mathcal{P}_s^a \) such that:
1. \( \text{core}(m) \subseteq \mathcal{P} \),
2. \( \arg\min_{p \in \mathcal{P}} \mathbb{E}[r] \in \text{core}(m) \),
3. \( \arg\max_{p \in \mathcal{P}} \mathbb{E}[c] \in \text{core}(m) \).

Define the dual fuzzy measure \( m'(A) := 1 - m(\mathcal{P}_s^a \setminus A) \). Then the Fuz-RL problem:
\[
\max_{\pi} J_r^{\mathcal{F}}(\pi) \quad \text{s.t.} \quad J_c^{\mathcal{F}}(\pi) \leq B,
\]
where 
\[
\begin{aligned}
  J_r^{\mathcal{F}}(\pi) = \mathbb{E}_{s_0\sim d_0}\left[(C)\int_{\mathcal{P}_s^a} \sum_{t=0}^\infty \gamma^t r(s_t,\pi(s_t)) dm(p)\right], \\ 
  J_c^{\mathcal{F}}(\pi) = \mathbb{E}_{s_0\sim d_0}\left[(C)\int_{\mathcal{P}_s^a} \sum_{t=0}^\infty \gamma^t c(s_t,\pi(s_t)) dm'(p)\right],  
\end{aligned}
\]
is equivalent to the original robust CMDP.
\end{theorem}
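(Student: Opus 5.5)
The plan is to show that the two objective functionals and the two constraint functionals coincide for every fixed policy $\pi$. Once that holds, the feasible sets and the objectives of the two problems agree pointwise in $\pi$, so their optimal values and optimal policies are the same.

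The argument runs in three steps:
\begin{enumerate}
\item \textbf{Objective.} Fix $\pi$ and write $f_r(p) := \mathbb{E}_{\tau\sim(\pi,p)}[\sum_t \gamma^t r(s_t,a_t)]$, viewed as a bounded function of the transition model $p$. Boundedness follows from bounded $r$ and $\gamma<1$. Lemma \ref{lemma:choquet_pessimism} applies because a convex $\lambda$-fuzzy measure has $\lambda\ge 0$. It gives $J_r^{\mathcal F}(\pi)=\min_{P\in\mathrm{core}(m)}\mathbb{E}_P[f_r]$.
\item \textbf{Constraint.} For the cost, Lemma \ref{lem:core_duality} gives $J_c^{\mathcal F}(\pi)=\max_{P\in\mathrm{core}(m)}\mathbb{E}_P[f_c]$, with $f_c$ defined analogously.
\item \textbf{Collapse to point masses.} The remaining task is to identify $\min_{P\in\mathrm{core}(m)}\mathbb{E}_P[f_r]$ with $\min_{p\in\mathcal P} f_r(p)$, and likewise for the max. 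I would prove two inequalities.
\begin{itemize}
\item By hypothesis (1), every $P\in\mathrm{core}(m)$ is supported on $\mathcal P$. Hence $\mathbb{E}_P[f_r]\ge \min_{p\in\mathcal P} f_r(p)$, since an average is at least the minimum. This gives $J_r^{\mathcal F}\ge \min_{\mathcal P} f_r$.
\item By hypothesis (2), the minimizer $p^\star=\arg\min_{p\in\mathcal P} f_r(p)$, read as the Dirac measure $\delta_{p^\star}$, lies in $\mathrm{core}(m)$. So the core minimum is at most $f_r(p^\star)$, giving the reverse inequality.
\end{itemize}
The cost case is symmetric, using hypothesis (3) and the fact that an average is at most the maximum. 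Together these show $J_r^{\mathcal F}(\pi)=\min_{p}\mathbb{E}_{(\pi,p)}[r]$ and $J_c^{\mathcal F}(\pi)=\max_p\mathbb{E}_{(\pi,p)}[c]$ for every $\pi$, which yields the equivalence.
\end{enumerate}

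The main obstacle is making the hypotheses precise enough for step 3 to go through, and I would resolve it by fixing interpretations up front. First, $\mathrm{core}(m)\subseteq\mathcal P$ must be read as ``probability measures over models concentrated on $\mathcal P$''. Second, the argmin and argmax in (2)--(3) must be read as the corresponding Dirac measures. Third, the argmin depends on $\pi$, so hypotheses (2) and (3) have to hold for each $\pi$; I would state this explicitly.

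There is a secondary difficulty. The Choquet integral in $J^{\mathcal F}$ is over a single $\mathcal P_s^a$, while the robust return depends on the whole product $\mathcal P=\prod_{s,a}\mathcal P_s^a$. To handle this I would appeal to $(s,a)$-rectangularity, in one of two ways:
\begin{itemize}
\item apply the argument state-wise to the Bellman recursion, so that the fixed point of $\mathcal F$ from Theorem \ref{thm:convergence} coincides with the robust value function of Nilim and El Ghaoui;
\item or simply interpret the integral as taken over the product set.
\end{itemize}
I would present the trajectory-level argument as the main line and sketch the recursive version as justification.
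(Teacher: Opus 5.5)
Your proposal is sound under the readings of hypotheses (1)--(3) that you fix, and it follows the paper's proof. Both represent $J_r^{\mathcal F}$ and $J_c^{\mathcal F}$ as the minimum and maximum of $\mathbb{E}_P$ over $\mathrm{core}(m)$ via Lemma \ref{lemma:choquet_pessimism} and Lemma \ref{lem:core_duality}, then collapse these to $\min_{p\in\mathcal P}$ and $\max_{p\in\mathcal P}$ because the core lies inside $\mathcal P$ and contains the extremizers. Your two-inequality argument makes explicit what the paper compresses into its steps (b) and (d), and the $(s,a)$-rectangularity you invoke (one worst-case kernel serving every initial state) is also what justifies moving $\mathbb{E}_{s_0\sim d_0}$ past the minimum, a step the paper takes silently.
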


\begin{proof}
\textbf{Step 1: Core Inclusion and Extremal Coverage}.  
By the fuzzy measure network's design, each uncertainty level \( \mathcal{P}_{s,k}^a \) is bounded within the \( \epsilon \)-neighborhood of the nominal dynamics (Equation \ref{eq:uncertainty_set}). The sigmoid function ensures:
\[
0 < m(\mathcal{P}_{s,k}^a) < 1,
\]
and assigns non-zero weights to extremal perturbations \( \arg\min_{p \in \mathcal{P}} \mathbb{E}[r] \) and \( \arg\max_{p \in \mathcal{P}} \mathbb{E}[c] \), guaranteeing:
\[
\arg\min_{p \in \mathcal{P}} \mathbb{E}[r] \in \text{core}(m), \quad \arg\max_{p \in \mathcal{P}} \mathbb{E}[c] \in \text{core}(m).
\]
Thus, \( \text{core}(m) \subseteq \mathcal{P} \) and covers extremal points of \( \mathcal{P} \).

\textbf{Step 2: Duality of Fuzzy Measures}.  
For the convex fuzzy measure \( m \), its dual \( m' \) is concave. By Choquet duality \cite{grabisch2016set}:
\[
(C)\int f \, dm = \inf_{q \in \text{core}(m)} \mathbb{E}_q[f], \quad (C)\int f \, dm' = \sup_{q \in \text{core}(m)} \mathbb{E}_q[f].
\]

\textbf{Step 3: Reward Objective Equivalence}.  
For the reward function:
\[
J_r^{\mathcal{F}}(\pi) = \mathbb{E}_{s_0} \left[ (C)\int \mathbb{E}[r] dm \right] \overset{(a)}{=} \mathbb{E}_{s_0} \left[ \min_{q \in \text{core}(m)} \mathbb{E}_q[r] \right] \overset{(b)}{=} \min_{p \in \mathcal{P}} \mathbb{E}_{(\pi,p)}[r],
\]
where (a) uses Choquet minimality for convex \( m \), and (b) holds because \( \text{core}(m) \) contains \( \arg\min_{p \in \mathcal{P}} \mathbb{E}[r] \).

\textbf{Step 4: Cost Constraint Equivalence}.  
For the cost function:
\[
J_c^{\mathcal{F}}(\pi) = \mathbb{E}_{s_0} \left[ (C)\int \mathbb{E}[c] dm' \right] \overset{(c)}{=} \mathbb{E}_{s_0} \left[ \max_{q \in \text{core}(m)} \mathbb{E}_q[c] \right] \overset{(d)}{=} \max_{p \in \mathcal{P}} \mathbb{E}_{(\pi,p)}[c],
\]
where (c) uses Choquet maximality for concave \( m' \), and (d) holds because \( \text{core}(m) \) contains \( \arg\max_{p \in \mathcal{P}} \mathbb{E}[c] \).

\textbf{Step 5: Equivalence Conclusion}.  
Combining Steps 3–4, the Fuz-RL problem:
\[
\max_{\pi} J_r^{\mathcal{F}}(\pi) \quad \text{s.t.} \quad J_c^{\mathcal{F}}(\pi) \leq B
\]
is equivalent to the original robust CMDP, as both objectives and constraints encode the same worst-case expectations over \( \mathcal{P} \). \qedhere
\end{proof}

\section{Appendix/Algorithm}
\label{app::alg}

\begin{algorithm}[htb]
\caption{Fuzzy-Guided Robust Framework for Safe RL (Fuz-RL)}
\label{alg: Fuz_RL}
\begin{algorithmic}[1]
\State \textbf{Input:} actor $\theta_{\pi}$, critics $\theta_r, \theta_c$, fuzzy density parameters $g$, uncertainty levels $K$, replay buffer $\mathcal{D}$.
\State \textbf{Initialize:} $\theta_r, \theta_c, \theta_\pi, g$, buffer $\mathcal{D}$.
\For{epoch $= 1$ to MaxEpoch}
    \For{$t = 1$ to $T$}
        \State Sample action $a_t \sim \pi_{\theta_{\pi}}(\cdot|s_t)$, observe $s_{t+1}$ and get $r_t, c_t$.
        \State For each $i \in \{1,\dots,K\}$, generate perturbed state $\tilde{s}_i = s + \epsilon_i \cdot \mathcal{N}(0,I)$.
        \State Store the tuple $(s_t,a_t,r_t,c_t,s_{t+1}, \{\tilde{s}_i\}_{i=1}^K)$ in $\mathcal{D}$.
    \EndFor
    \For{each \textit{actor/critic network update step}}
        \State Sample mini-batch $\tau$ from $\mathcal{D}$.
        \State Compute perturbed values $\{V_{\theta_r}(\tilde{s}_i)\}$ and $\{V_{\theta_c}(\tilde{s}_i)\}$.
        \State Calculate Choquet integrals using Eqs.~\eqref{eq:standard_choquet}--\eqref{eq:dual_choquet}.
        \State Update $V_{\theta_r}, V_{\theta_c}$, fuzzy density parameters $g$ using Eq.~\eqref{eq:value_updates} and Eq.~\eqref{eq:loss_fuzzy}.
        \State Update $\pi_{\theta_\pi}$ using Eq.~\eqref{eq:primal_dual} with specific safe RL algorithm.
    \EndFor
\EndFor
\State \textbf{Output:} Trained parameters $\theta_\pi, \theta_r, \theta_c, g$.
\end{algorithmic}
\end{algorithm}

\subsection{Optimization Details}
\label{app::optimization}

For optimal policy optimization based on value estimation, we solve the constrained optimization problem using the Lagrangian method:

\begin{equation}
\mathcal{L}(\pi, \lambda) = J_r^{\mathcal{F}}(\pi) - \lambda(J_c^{\mathcal{F}}(\pi) - B)
\end{equation}

The optimal policy $\pi^*$ and the optimal Lagrangian multiplier $\lambda^*$ can be obtained by:
\begin{equation}
(\pi^*, \lambda^*) = \arg\max_{\pi}\min_{\lambda\geq0} \mathcal{L}(\pi, \lambda)
\end{equation}

For a given state $s$, the optimal action selection rule becomes:
\begin{equation}
\pi^*(a|s) = \arg\max_{a\in\mathcal{A}} Q_r^{\pi}(s,a) - \lambda^*Q_c^{\pi}(s,a)
\end{equation}

where the action-value functions are:
\begin{align}
Q_r^{\pi}(s,a) &= r(s,a) + \gamma\mathbb{E}_{s'\sim p}[V_r^*(s')] \\
Q_c^{\pi}(s,a) &= c(s,a) + \gamma\mathbb{E}_{s'\sim p}[V_c^*(s')]
\end{align}

Here, $V_r^*$ and $V_c^*$ are the unique fixed points guaranteed by Theorems \ref{thm:gamma_contraction} and \ref{thm:convergence}, representing the optimal robust value functions for reward and cost, respectively.

In practice, we compute the optimal policy iteratively by initializing a Lagrangian multiplier $\lambda^{(0)}$ and then alternating between policy updates and multiplier updates. At each iteration $k$, we compute:
\begin{align}
\pi^{(k)} &= \arg\max_{\pi} J_r^{\mathcal{F}}(\pi) - \lambda^{(k)}(J_c^{\mathcal{F}}(\pi) - B) \\
\lambda^{(k+1)} &= [\lambda^{(k)} + \alpha(J_c^{\mathcal{F}}(\pi^{(k)}) - B)]^+
\end{align}
where $\alpha > 0$ is a step size and $[x]^+ = \max(0,x)$. This process continues until convergence, yielding the optimal safe policy $\pi^*$ that maximizes reward while satisfying the safety constraint.

\section{Appendix / Experiment Setting and More Results}
\subsection{Environment description}
\label{appendix:Environment_description}
\subsubsection{Double Integrator}
The following dynamics describe the double integrator:
\begin{equation}
\begin{bmatrix}
x_{t+1} \\
v_{t+1}
\end{bmatrix}
= 
\begin{bmatrix}
1 & 0 \\
0 & 1
\end{bmatrix}
\begin{bmatrix}
x_t \\
v_t
\end{bmatrix}
+ 
\begin{bmatrix}
0.005 \\ 0
\end{bmatrix}
a_t,
\end{equation}
where $a_t \in [-1,1]$. The safety constraints are $|x| \leq 2$ and $|v| \leq 2$. 

The reward function induced to unsafe state is designed as follows:

\begin{equation}
\begin{aligned}
    r(x, v) =   &\max(4 - (2(x - 1.5)^2 + 2(v + 1.5)^2), 0) + \\
                &\max(5 - (3(x + 2.2)^2 + 3(v + 2.2)^2), 0) + \\
                &\max(5 - (3(x - 2.2)^2 + 3(v - 2.2)^2), 0) + \\
                &\max(4 - (2(x + 1.5)^2 + 2(v - 1.5)^2), 0)
\end{aligned}
\end{equation}

\subsubsection{Safe Control Gym}
The safe-control-gym benchmark comprises three dynamical systems: the Cartpole, and the 1D and 2D Quadrotors, as shown in Figure. \ref{system}. In our setting, we use CartPole and 2D QuadRotor as the base environments.

\begin{figure}[h]
  \centering
  \includegraphics[width=1\columnwidth]{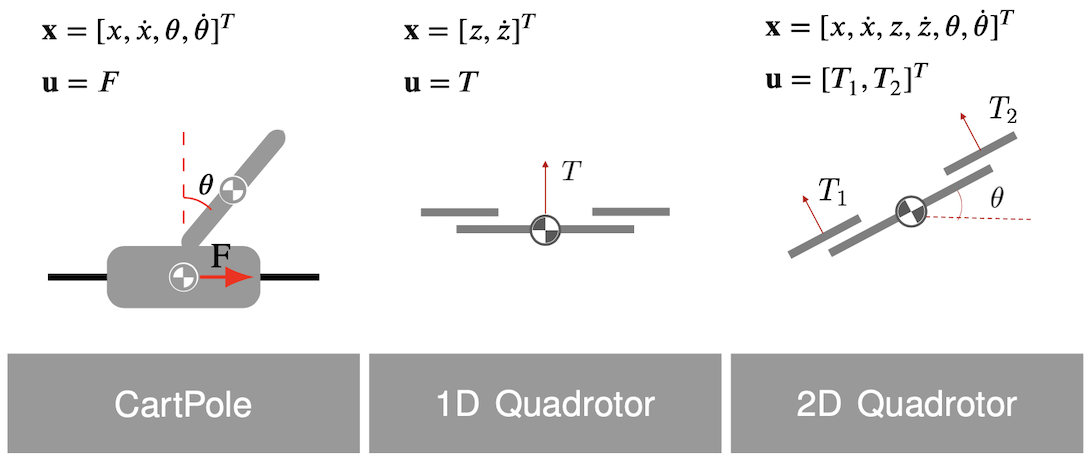} 
  \caption{Schematics, state and input vectors of the cart-pole, and the 1D and 2D quadrotor environments in safe-control-gym.}
  \label{system}
\end{figure}

For the CartPole system, the system state includes position $x$ and velocity $v$ of the cart, angle $\theta$, and angular velocity $\omega$ of the pole. The control inputs $u \in [-1,1] \subset \mathbb{R}$ and external disturbances $a \in [-0.5,0.5] \subset \mathbb{R}$ are horizontal forces applied on the cart. The safety constraints are $|\theta| \leq 0.2$, i.e., keeping the pole nearly upright. The constraint function is $h(\theta) = \min\{\theta + 0.2, 0.2 - \theta\}$. 

For the 2D QuadRotor system, the state of the system is given by $\mathbf{s} = 
\begin{bmatrix}
x, 
\dot{x}, 
z, 
\dot{z}, 
\theta, 
\dot{\theta}
\end{bmatrix}^T$, where $(x, z)$ and $(\dot{x}, \dot{z})$ are the translation position and velocity of the COM of the quadrotor in the $xz$-plane, and $\theta$ and $\dot{\theta}$ are the pitch angle and the pitch angle rate, respectively. The input of the system is the thrusts $\mathbf{a} = \begin{bmatrix} T_1, T_2 \end{bmatrix}^T$ generated by two pairs of motors, one on each side of the body’s $y$-axis. The safety constraints are $z - 0.5 > 0$ and $1.5 - z > 0$, i.e., maintaining its vertical position $z$ between $[-0.5, 1.5]$. The constraint function is $h(z) = \min\{z - 0.5, 1.5 - z\}$.

For the reward function setup, we utilize a weighted sum of the errors between the current state $\mathbf{s}$, action $\mathbf{a}$, and their reference values as the reward for each step. The details of the weighting are provided in Table \ref{params_setting}.

Besides, each environment in Safe-Control-Gym supports two control tasks: stabilization and trajectory tracking. For stabilization, safe-controlgym provides an equilibrium pair for the system, $x^{\text{ref}}, u^{\text{ref}}$. For trajectory tracking, the benchmark includes a trajectory generation module capable of generating circular, sinusoidal, lemniscate, or square trajectories. The module returns references $ x_{\text{ref}_i},\ u_{\text{ref}_i} \ \forall i \in \{0, \ldots, L\} $, where $L$ is the number of control steps in an episode. 

\subsection{Hyper-parameters}
\label{appendix:Hyper_parameters}

\subsubsection{Hyper-parameters of RL}
In all the experiments, we have revised the benchmark algorithms and Fuz-RL employing the RL framework provided by Spinning Up. The complete hyperparameters used in the experiments are shown in Table \ref{params_setting}.

Particularly, for the CPPO and Fuz-CPPO algorithms, the risk threshold $\beta$ for adverse trajectories is set to 100. In the case of the PPOL and CUP algorithms, the initial value of the Lagrange coefficient is set to 0.001, with an upper limit of 0.2 and a learning rate of 0.02. For the RAMU algorithm, the Wang transform is utilized with $\eta= 0.75$, which is applied to both the objective and the constraint.

\begin{table}[ht]
\centering
\caption{Hyperparameter Settings of Fuz-RL Training and Testing}
\label{params_setting}
\begin{tabular}{@{}c|cccc@{}}
\toprule
                   & CartPole-Stab    & CartPole-Track            & QuadRotor-Stab         & QuadRotor-Track                    \\ 
\midrule
rollout length           & 150    & 150    & 250    & 250    \\
training epoch           & 500    & 500    & 1000   & 1000   \\
batch size               & 64     & 64     & 64     & 128    \\
cost limit               & 1      & 1      & 10     & 10     \\
uncertainty level $K$           & 10     & 10     & 15     & 15    \\
optimization step    & 40     & 40     & 80     & 80     \\
actor learning rate      & 0.0003 & 0.0003 & 0.0002 & 0.0002 \\
critic learning rate     & 0.001  & 0.001  & 0.001  & 0.001  \\
fuzzy learning rate      & 0.0003 & 0.0003 & 0.0003 & 0.0003 \\
target KL                & 0.2    & 0.2    & 0.15   & 0.15   \\
hidden\_sizes      & {[}64, 64{]}     & {[}64, 64{]}              & {[}256, 128{]}         & {[}256, 128{]}                     \\
rew\_act\_weight         & 0.1    & 0.01   & 0.1    & 0.01 \\
rew\_state\_weight & $\begin{bmatrix} 1 &1 \\ 1 &1\end{bmatrix}$ & $\begin{bmatrix} 1 & 0.01 \\ 0.01 & 0.01 \end{bmatrix}$ & $\begin{bmatrix} 1  &1 \\ 1 &1 \\ 1 &1\end{bmatrix}$ & $\begin{bmatrix} 1 & 0.01 \\ 1 & 0.01 \\ 0.01 & 0.01 \end{bmatrix}$ \\ 
\bottomrule
\end{tabular}
\end{table}

\subsubsection{Uncertainty setting of Safe-Control-Gym}

\begin{table}[htbp]
\centering
\label{uncertainty_setting}
\caption{The observation, dynamics and action uncertainty settings of Safe-Control-Gym tasks}
\begin{tabular}{@{}c|c|c|cc@{}}
\toprule
\multirow{2}{*}{Uncertainty Object} & \multirow{2}{*}{Type}        & \multirow{2}{*}{Config}               & \multicolumn{2}{c}{System}                               \\ \cmidrule(l){4-5} 
                                    &                              &                                       & \multicolumn{1}{c|}{CartPole} & QuadRotor                \\ \midrule
\multirow{3}{*}{Observation}        & \multirow{3}{*}{white noise} & \multirow{3}{*}{std:{[}-0.1, 0.1{]}}  & $(x, \dot{x})$                & $(x, \dot{x})$           \\
                                    &                              &                                       & $(\theta, \dot{\theta})$      & $(z, \dot{z})$           \\
                                    &                              &                                       &                               & $(\theta, \dot{\theta})$ \\ 
\multirow{2}{*}{Dynamics}           & \multirow{2}{*}{white noise} & \multirow{2}{*}{std:{[}-0.1, 0.1{]}}  & pole length                   & quadrotor mass           \\
                                    &                              &                                       & pole mass                     & quadrotor inertia        \\ 
\multirow{4}{*}{Action} &
  \multirow{4}{*}{Impulse noise} &
  \multicolumn{1}{l|}{Force: \ {[}-1, 1{]}} &
  \multicolumn{1}{c}{\multirow{4}{*}{horizontal forces}} &
  \multicolumn{1}{c}{\multirow{4}{*}{motors thrusts}} \\
                                    &                              & \multicolumn{1}{l|}{Step offset: \ 20}  & \multicolumn{1}{c}{}          & \multicolumn{1}{c}{}     \\
                                    &                              & \multicolumn{1}{l|}{Duration: \ 80}      & \multicolumn{1}{c}{}          & \multicolumn{1}{c}{}     \\
                                    &                              & \multicolumn{1}{l|}{Decary rate: \ 0.9} & \multicolumn{1}{c}{}          & \multicolumn{1}{c}{}     \\
\bottomrule
\end{tabular}
\end{table}

\subsection{More experiment results}
\label{appendix:results} 

\subsubsection{Comparative Analysis of Fuzzy Operator and Min-Max Operator in Safe Reinforcement Learning}

We first formally define three safety sets: the fundamental safety set $\mathcal{S}_c$ represents permissible state constraints, the safe forward invariant set $\mathcal{S_I}$ (a subset of $\mathcal{S}_c$) guarantees persistent state containment within $\mathcal{S}_c$ under nominal conditions, and the robust safe forward invariant set $\mathcal{S_R}$ (a conservative subset of $\mathcal{S_I}$) maintains state invariance under worst-case disturbances. 

\begin{figure}[htbp]
  \centering
  \includegraphics[width=\textwidth]{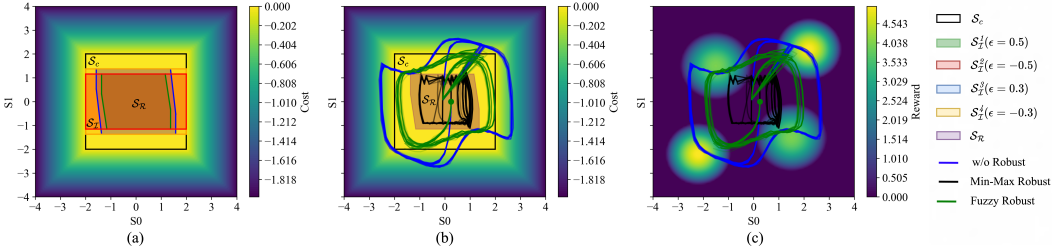}
  \caption{(a) Hierarchical relationship of safety sets, (b) Cost space and (c) Reward space trajectory comparisons, with dashed lines indicating safety boundaries.}
  \label{simple_validation}
\end{figure}

As shown in Fig. \ref{simple_validation}(a), Conventional min-max approaches through robust control barrier functions (RCBFs) strictly confine states within $\mathcal{S_R}$, where $\mathcal{S_R}$ (yellow region) occupies only 23.6\% of $\mathcal{S}_c$ (gray region). This conservative strategy ensures absolute safety at the cost of exploration capability, sacrificing access to 41.7\% of high-reward regions.

Our proposed fuzzy robust method overcomes this limitation through dynamic weighting on different uncertainty levels. The training curves in Fig. \ref{demo_train} demonstrate that in the double-integrator environment, Fuz-RL's value iteration algorithm achieves 2.17× higher final returns compared to the min-max approach under Level-15 configuration. The underlying mechanism enables adaptive safety margin adjustment, permitting safe exploration in $\mathcal{S}_c \setminus \mathcal{S_R}$ regions during 97.4\% of test episodes. Trajectory heatmaps in Fig. \ref{simple_validation}(b)-(c) reveal that while conventional methods (black trajectories) remain strictly confined within $\mathcal{S_R}$, and non-robust approaches (blue trajectories) risk 32.6\% boundary violations, our fuzzy robust method (green trajectories) achieves optimal performance balance with 97\% safety rate through dynamic fuzzy measure.

\begin{figure}[htbp]
  \centering
  \includegraphics[width=0.95\textwidth]{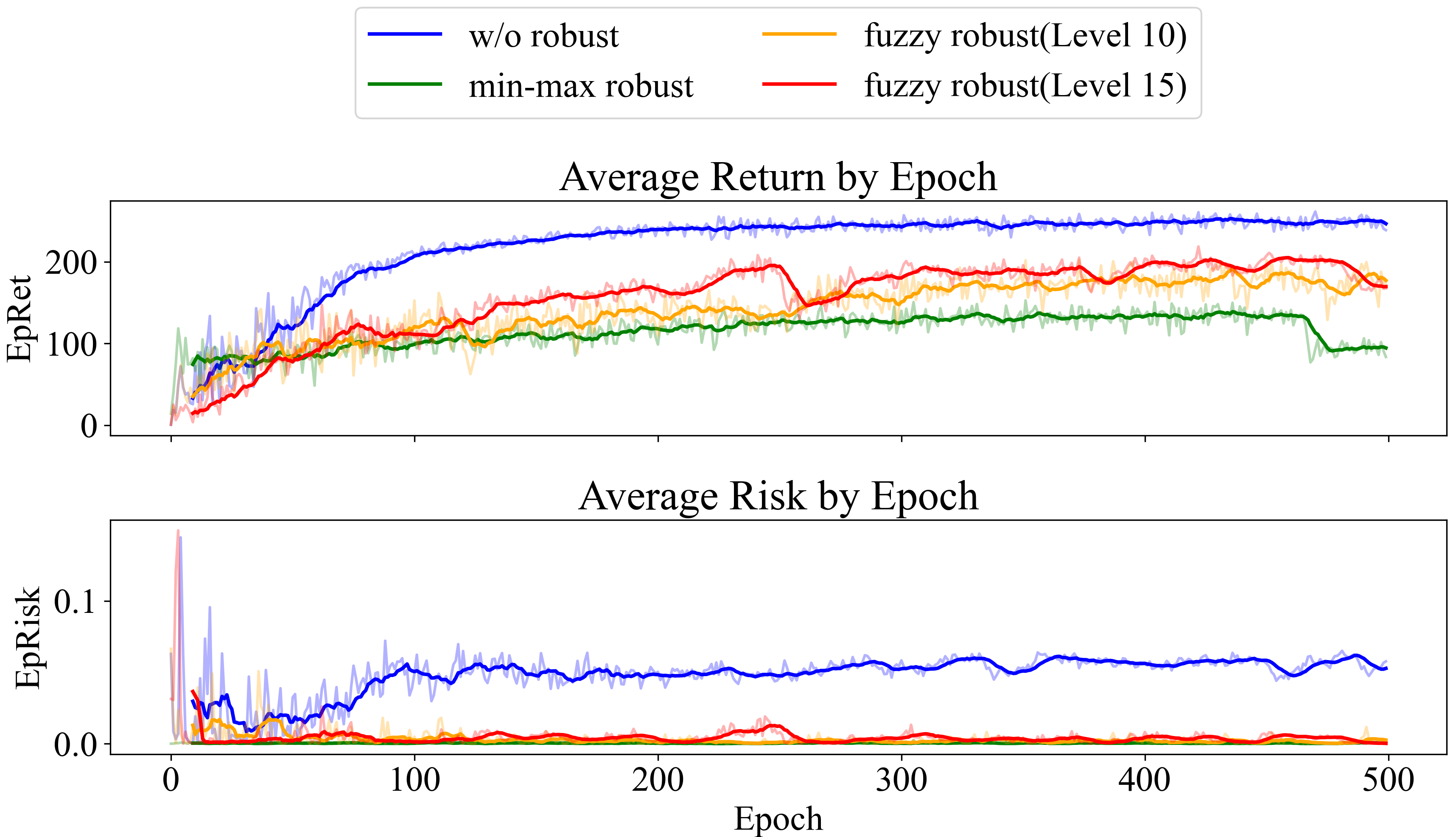}
  \caption{Training curve comparison in double integrator environment, with shaded regions indicating standard deviation across 5 random seeds.}
  \label{demo_train}
\end{figure}

\subsubsection{Comparison between Safe RL and Fuz-RL}
Similar to the Quadrotor-Track task, we set different levels of perturbations in the observation, dynamics, and action to evaluate the performance of the CartPole-Stab, CartPole-Track, and Quadrotor-Stab tasks under the three benchmark safe RL algorithms and the corresponding Fuz-RL, as shown in Figures \ref{cartpole_track}, \ref{quadrotor_stab}, and \ref{quadrotor_track}. Each point in the figures represents the average metrics from 10 episodes run for each of 10 different seeds.

\subsubsection{Comparison between Fuz-RL and RAMU}
To compare with the current SOTA algorithms in robust safe RL, this section showcases the performance comparison between RAMU and Fuz-RL under uncertainties in observation, action, and dynamics, as depicted in Figures \ref{RAMU_obs}, Figures \ref{RAMU_act}, and Figures \ref{RAMU_dyn}, respectively.

\subsubsection{Validation on Power System Frequency Control Task}
\label{app::powersystem}
The IEEE 39-bus system, a standard power grid benchmark with 10 generators and 46 transmission lines, was used to validate Fuz-RL's performance in frequency control tasks. The system state captures frequency deviations ($\Delta f$), generator rotor angles ($\delta$), mechanical power outputs ($P_m$), and tie-line power flows ($P_{\text{tie}}$). Control actions involve real-time adjustments of generator active power setpoints ($P_{\text{ref}}$) and discrete load shedding commands (0-100\% reduction). The primary objectives are to maintain frequency within $[59.8\,\text{Hz}, 60.2\,\text{Hz}]$ under stochastic load/renewable fluctuations while minimizing control costs ($\sum \|P_{\text{ref}} - P_{\text{nominal}}\|_2$) and avoiding safety-critical violations such as line overloads (>120\% capacity).

Robustness tests were conducted under three uncertainty scenarios: 
\begin{itemize}
\item \textit{Observation noise} ($\sigma=0.1\,\text{Hz}$ Gaussian noise in frequency measurements),
\item \textit{Action noise} (100ms delay + 5\% bias in control signals),
\item \textit{Dynamics noise} ($\pm10\%$ parameter drift in generator inertia/damping).
\end{itemize}

\begin{table}[ht]
\centering
\caption{Performance on IEEE 39-Bus Frequency Control (AvgRet / AvgRisk)}
\label{tab:power_system}
\begin{tabular}{@{}llccc@{}}
\toprule
\textbf{Case} & \textbf{Method} & \textbf{Observation Noise} & \textbf{Action Noise} & \textbf{Dynamic Noise} \\ 
\midrule
IEEE-39 Bus & PPOL & -5456.30 / 0.17 & -6357.81 / 0.16 & -7471.96 / 0.52 \\
IEEE-39 Bus & \textbf{Fuz-PPOL} & \textbf{-4822.03 / 0.14} & \textbf{-5789.19 / 0.13} & \textbf{-7363.20 / 0.47} \\
\bottomrule
\end{tabular}
\end{table}

Fuz-PPOL demonstrates consistent improvements over PPOL. Under observation noise, Fuz-PPOL get 11.6\% higher returns and 17.6\% lower risk. For action noise, the AvgRet metic is improved by 8.9\% with 18.8\% risk reduction. Under dynamics perturbations, Fuz-PPOL narrows performance degradation while reducing safety violations by 9.6\%. 

\begin{figure}[t]
  \centering
  \includegraphics[width=1\columnwidth]{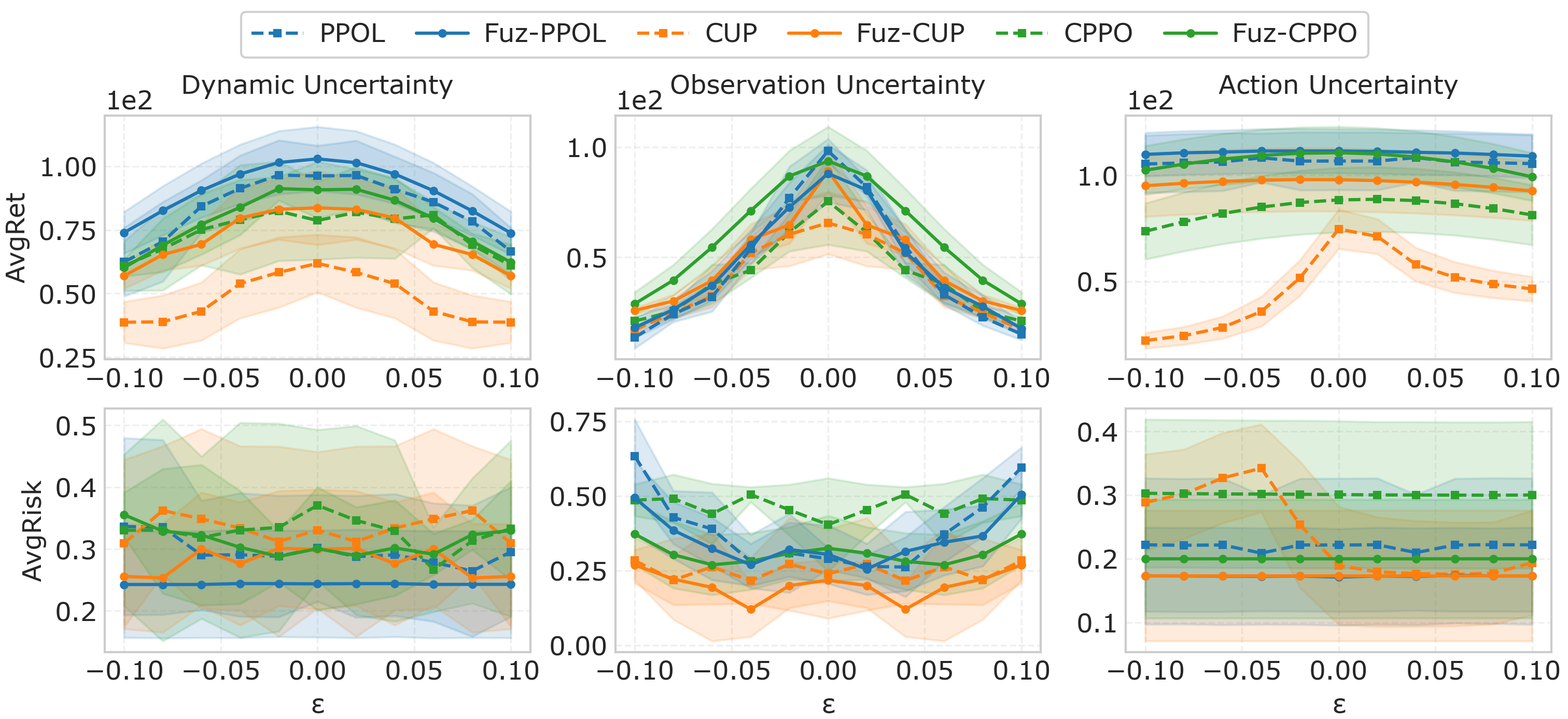} 
  \caption{Average episodic rewards and average episodic risk of three safe RL and Fuz-RL under various uncertainty settings in Cartpole Stab task.}
  \label{cartpole_track}
\end{figure}

\begin{figure}[t]
  \centering
  \includegraphics[width=1\columnwidth]{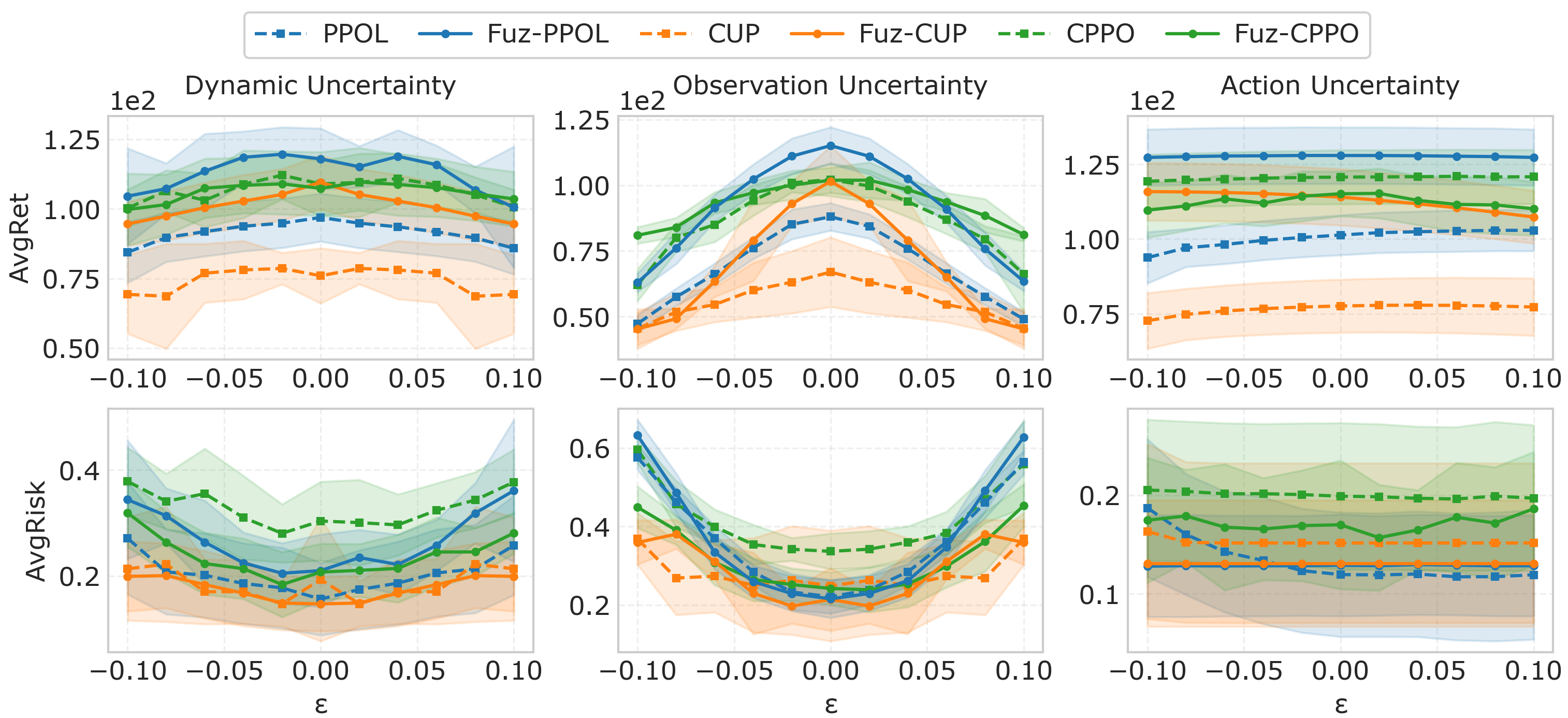} 
  \caption{Average episodic rewards and average episodic risk of three safe RL and Fuz-RL under various uncertainty settings in Cartpole Track task.}
  \label{quadrotor_stab}
\end{figure}

\begin{figure}[t]
  \centering
  \includegraphics[width=1\columnwidth]{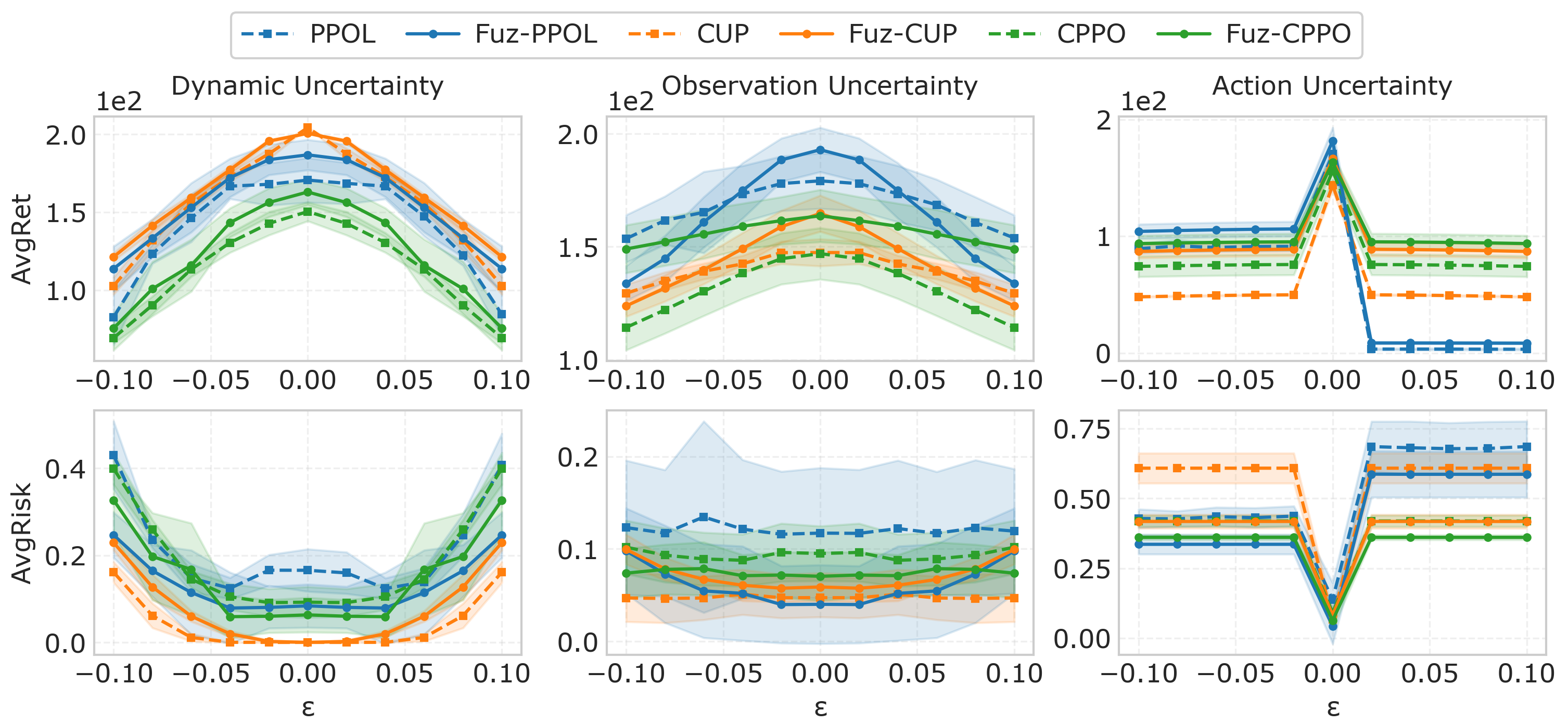} 
  \caption{Average episodic rewards and average episodic risk of three safe RL and Fuz-RL under various uncertainty settings in Quadrotor Stab task.}
  \label{quadrotor_track}
\end{figure}

\begin{figure}[t]
  \centering
  \includegraphics[width=\columnwidth]{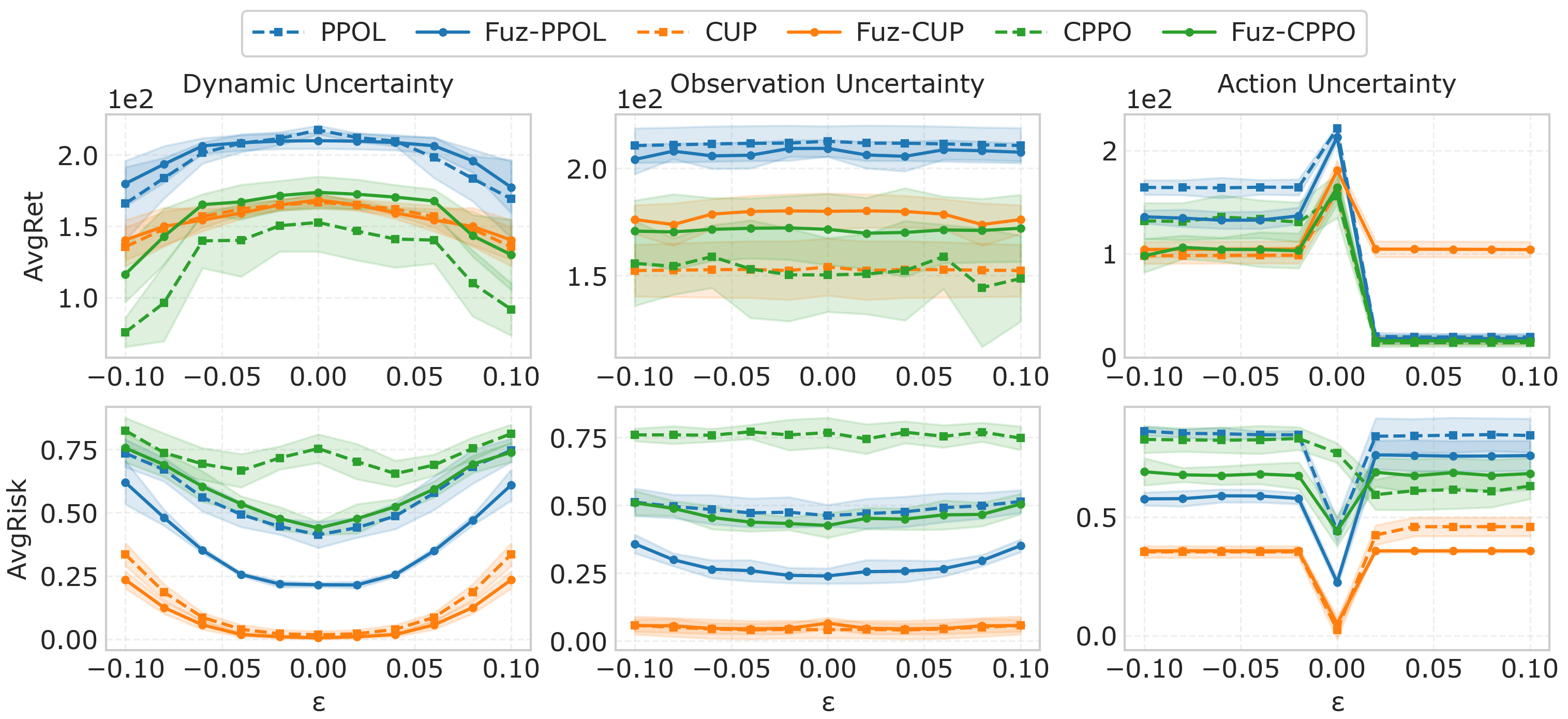} 
  \caption{Average episodic rewards and average episodic risk of three safe RL and Fuz-RL under various uncertainty settings in Quadrotor Track task.}
  \label{fig:quadrotor_stab}
\end{figure}

\begin{figure}[t]
  \centering
  \includegraphics[width=\columnwidth]{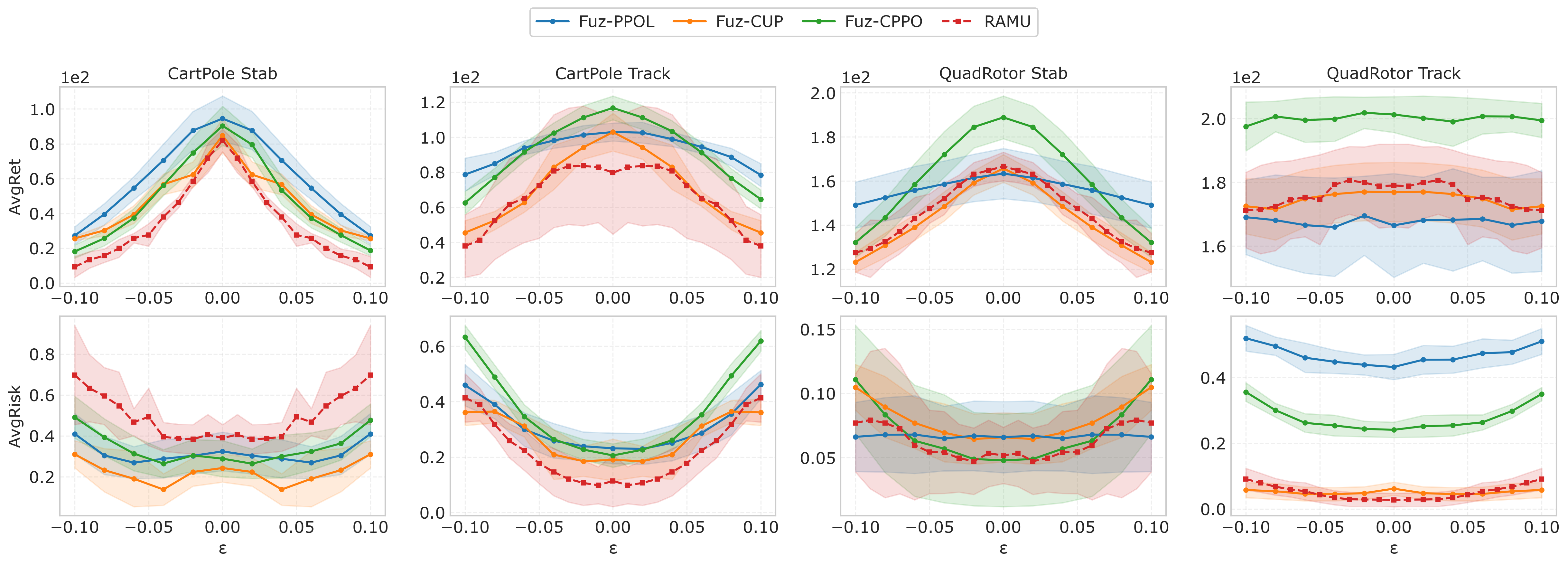} 
  \caption{Average episodic reward (top) and average episodic risk (bottom) of Fuz-RL and RAMU in different scales' observation uncertainty settings. The horizontal axis represents the uncertainty level.}
  \label{RAMU_obs}
\end{figure}

\begin{figure}[t]
  \centering
  \includegraphics[width=\columnwidth]{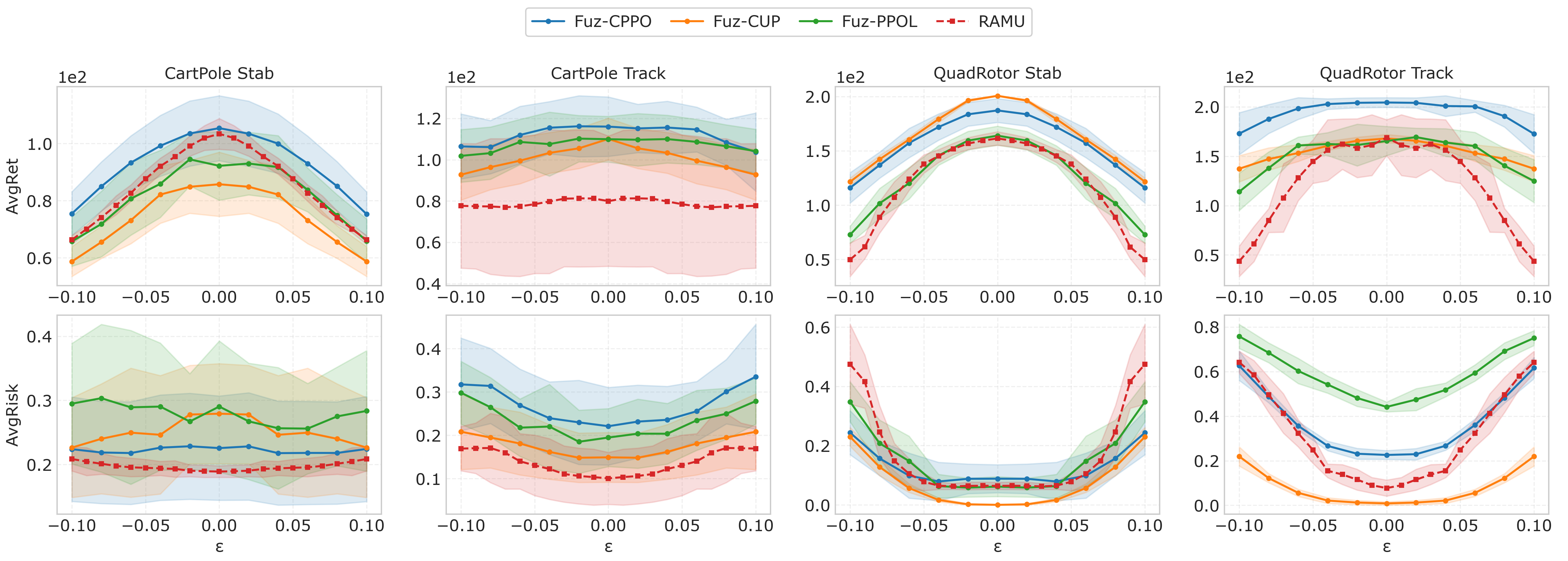} 
  \caption{Average episodic reward (top) and average episodic risk (bottom) of Fuz-RL and RAMU in different scales' action uncertainty settings. The horizontal axis represents the uncertainty level.}
  \label{RAMU_act}
\end{figure}

\begin{figure}[t]
  \centering
  \includegraphics[width=\columnwidth]{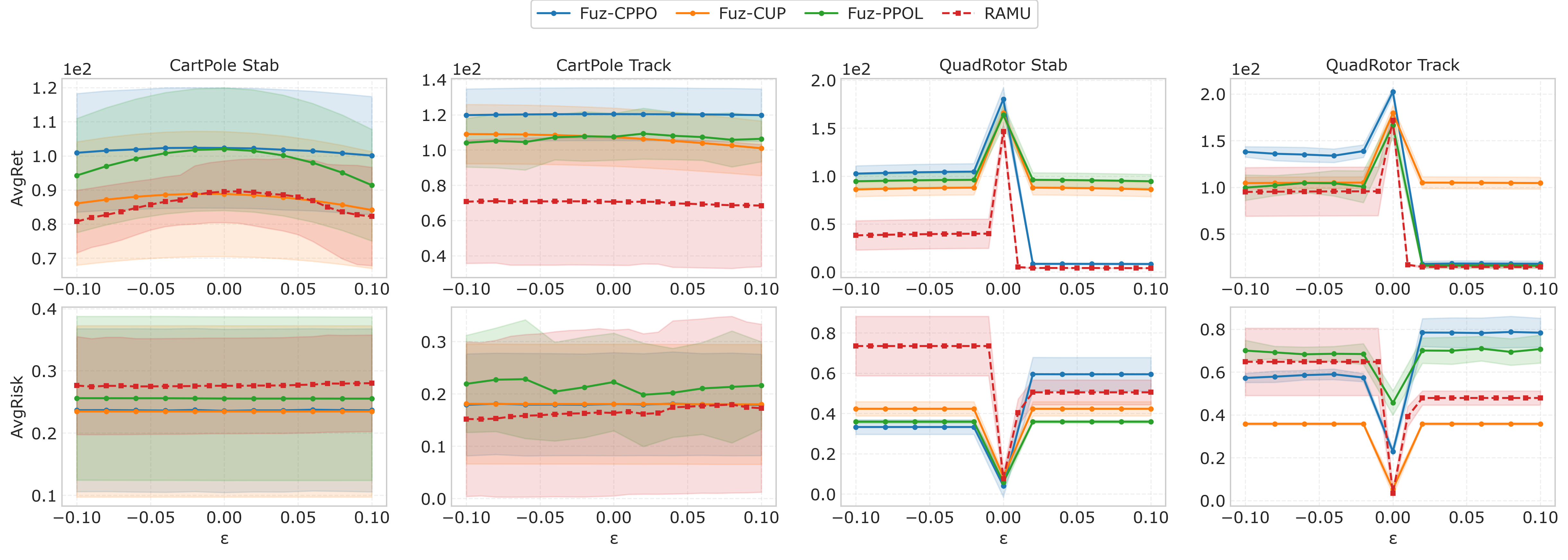} 
  \caption{Average episodic reward (top) and average episodic risk (bottom) of Fuz-RL and RAMU in different scales' dynamics uncertainty settings. The horizontal axis represents the uncertainty level.}
  \label{RAMU_dyn}
\end{figure}

\begin{figure}[t]
  \centering
  \includegraphics[width=\columnwidth]{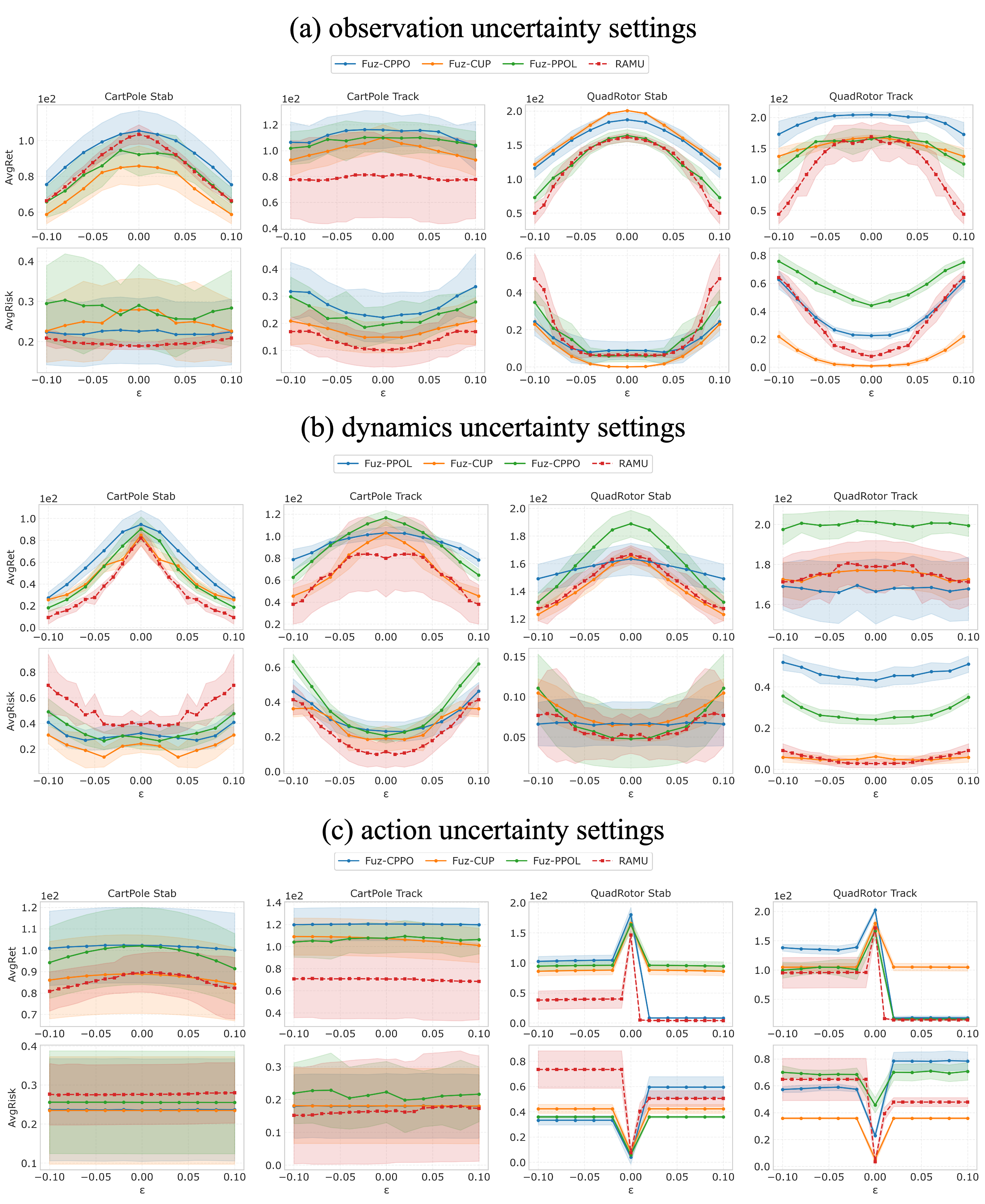} 
  \caption{Average episodic reward (top) and average episodic risk (bottom) of Fuz-RL and RAMU in different scales' observation, dynamics, and action uncertainty settings. The horizontal axis represents the uncertainty level.}
  \label{fig::all_comp}
\end{figure}

\clearpage


\newpage
\section*{NeurIPS Paper Checklist}

\begin{enumerate}

\item {\bf Claims}
    \item[] Question: Do the main claims made in the abstract and introduction accurately reflect the paper's contributions and scope?
    \item[] Answer: \answerYes{}
    \item[] Justification: We clearly indicate the main contributions of our Fuz-RL  scope in the abstract and introduction.
    \item[] Guidelines:
    \begin{itemize}
        \item The answer NA means that the abstract and introduction do not include the claims made in the paper.
        \item The abstract and/or introduction should clearly state the claims made, including the contributions made in the paper and important assumptions and limitations. A No or NA answer to this question will not be perceived well by the reviewers. 
        \item The claims made should match theoretical and experimental results, and reflect how much the results can be expected to generalize to other settings. 
        \item It is fine to include aspirational goals as motivation as long as it is clear that these goals are not attained by the paper. 
    \end{itemize}

\item {\bf Limitations}
    \item[] Question: Does the paper discuss the limitations of the work performed by the authors?
    \item[] Answer: \answerYes{} 
    \item[] Justification: In Section \ref{conclu}, we describe the limitations of the proposed Fuz-RL.
    \item[] Guidelines:
    \begin{itemize}
        \item The answer NA means that the paper has no limitation while the answer No means that the paper has limitations, but those are not discussed in the paper. 
        \item The authors are encouraged to create a separate "Limitations" section in their paper.
        \item The paper should point out any strong assumptions and how robust the results are to violations of these assumptions (e.g., independence assumptions, noiseless settings, model well-specification, asymptotic approximations only holding locally). The authors should reflect on how these assumptions might be violated in practice and what the implications would be.
        \item The authors should reflect on the scope of the claims made, e.g., if the approach was only tested on a few datasets or with a few runs. In general, empirical results often depend on implicit assumptions, which should be articulated.
        \item The authors should reflect on the factors that influence the performance of the approach. For example, a facial recognition algorithm may perform poorly when image resolution is low or images are taken in low lighting. Or a speech-to-text system might not be used reliably to provide closed captions for online lectures because it fails to handle technical jargon.
        \item The authors should discuss the computational efficiency of the proposed algorithms and how they scale with dataset size.
        \item If applicable, the authors should discuss possible limitations of their approach to address problems of privacy and fairness.
        \item While the authors might fear that complete honesty about limitations might be used by reviewers as grounds for rejection, a worse outcome might be that reviewers discover limitations that aren't acknowledged in the paper. The authors should use their best judgment and recognize that individual actions in favor of transparency play an important role in developing norms that preserve the integrity of the community. Reviewers will be specifically instructed to not penalize honesty concerning limitations.
    \end{itemize}

\item {\bf Theory assumptions and proofs}
    \item[] Question: For each theoretical result, does the paper provide the full set of assumptions and a complete (and correct) proof?
    \item[] Answer: \answerYes{} 
    \item[] Justification: The assumptions of the theoretical result are clearly stated in the Theorem. \ref{thm:convergence}, Theorem. \ref{thm:gamma_contraction} and Theorm. \ref{theorem:equivalent}. Refer to Appendix~\ref{app:proofs} for the complete proof.
    \item[] Guidelines:
    \begin{itemize}
        \item The answer NA means that the paper does not include theoretical results. 
        \item All the theorems, formulas, and proofs in the paper should be numbered and cross-referenced.
        \item All assumptions should be clearly stated or referenced in the statement of any theorems.
        \item The proofs can either appear in the main paper or the supplemental material, but if they appear in the supplemental material, the authors are encouraged to provide a short proof sketch to provide intuition. 
        \item Inversely, any informal proof provided in the core of the paper should be complemented by formal proofs provided in appendix or supplemental material.
        \item Theorems and Lemmas that the proof relies upon should be properly referenced. 
    \end{itemize}

    \item {\bf Experimental result reproducibility}
    \item[] Question: Does the paper fully disclose all the information needed to reproduce the main experimental results of the paper to the extent that it affects the main claims and/or conclusions of the paper (regardless of whether the code and data are provided or not)?
    \item[] Answer: \answerYes{} 
    \item[] Justification: We provide a thorough explanation in Section~\ref{appendix:Environment_description} of all our experiments' settings, and the more general setting and detailed results can be found in Appendix~\ref{appendix:Hyper_parameters}.
    \item[] Guidelines:
    \begin{itemize}
        \item The answer NA means that the paper does not include experiments.
        \item If the paper includes experiments, a No answer to this question will not be perceived well by the reviewers: Making the paper reproducible is important, regardless of whether the code and data are provided or not.
        \item If the contribution is a dataset and/or model, the authors should describe the steps taken to make their results reproducible or verifiable. 
        \item Depending on the contribution, reproducibility can be accomplished in various ways. For example, if the contribution is a novel architecture, describing the architecture fully might suffice, or if the contribution is a specific model and empirical evaluation, it may be necessary to either make it possible for others to replicate the model with the same dataset, or provide access to the model. In general. releasing code and data is often one good way to accomplish this, but reproducibility can also be provided via detailed instructions for how to replicate the results, access to a hosted model (e.g., in the case of a large language model), releasing of a model checkpoint, or other means that are appropriate to the research performed.
        \item While NeurIPS does not require releasing code, the conference does require all submissions to provide some reasonable avenue for reproducibility, which may depend on the nature of the contribution. For example
        \begin{enumerate}
            \item If the contribution is primarily a new algorithm, the paper should make it clear how to reproduce that algorithm.
            \item If the contribution is primarily a new model architecture, the paper should describe the architecture clearly and fully.
            \item If the contribution is a new model (e.g., a large language model), then there should either be a way to access this model for reproducing the results or a way to reproduce the model (e.g., with an open-source dataset or instructions for how to construct the dataset).
            \item We recognize that reproducibility may be tricky in some cases, in which case authors are welcome to describe the particular way they provide for reproducibility. In the case of closed-source models, it may be that access to the model is limited in some way (e.g., to registered users), but it should be possible for other researchers to have some path to reproducing or verifying the results.
        \end{enumerate}
    \end{itemize}

\item {\bf Open access to data and code}
    \item[] Question: Does the paper provide open access to the data and code, with sufficient instructions to faithfully reproduce the main experimental results, as described in supplemental material?
    \item[] Answer: \answerYes{} 
    \item[] Justification: The whole code is provided in the supplemental material with sufficient instructions in the ``README.md'' file.
    \item[] Guidelines:
    \begin{itemize}
        \item The answer NA means that paper does not include experiments requiring code.
        \item Please see the NeurIPS code and data submission guidelines (\url{https://nips.cc/public/guides/CodeSubmissionPolicy}) for more details.
        \item While we encourage the release of code and data, we understand that this might not be possible, so “No” is an acceptable answer. Papers cannot be rejected simply for not including code, unless this is central to the contribution (e.g., for a new open-source benchmark).
        \item The instructions should contain the exact command and environment needed to run to reproduce the results. See the NeurIPS code and data submission guidelines (\url{https://nips.cc/public/guides/CodeSubmissionPolicy}) for more details.
        \item The authors should provide instructions on data access and preparation, including how to access the raw data, preprocessed data, intermediate data, and generated data, etc.
        \item The authors should provide scripts to reproduce all experimental results for the new proposed method and baselines. If only a subset of experiments are reproducible, they should state which ones are omitted from the script and why.
        \item At submission time, to preserve anonymity, the authors should release anonymized versions (if applicable).
        \item Providing as much information as possible in supplemental material (appended to the paper) is recommended, but including URLs to data and code is permitted.
    \end{itemize}

\item {\bf Experimental setting/details}
    \item[] Question: Does the paper specify all the training and test details (e.g., data splits, hyperparameters, how they were chosen, type of optimizer, etc.) necessary to understand the results?
    \item[] Answer: \answerYes{} 
    \item[] Justification: Appendix~\ref{appendix:Hyper_parameters} clearly states the implementation details.
    \item[] Guidelines:
    \begin{itemize}
        \item The answer NA means that the paper does not include experiments.
        \item The experimental setting should be presented in the core of the paper to a level of detail that is necessary to appreciate the results and make sense of them.
        \item The full details can be provided either with the code, in appendix, or as supplemental material.
    \end{itemize}

\item {\bf Experiment statistical significance}
    \item[] Question: Does the paper report error bars suitably and correctly defined or other appropriate information about the statistical significance of the experiments?
    \item[] Answer: \answerYes{} 
    \item[] Justification: We evaluate our experimental results across 10 episodes with 10 different random seeds, reporting standard deviations as error bars in the figures and as variance metrics in the tables.
    \item[] Guidelines:
    \begin{itemize}
        \item The answer NA means that the paper does not include experiments.
        \item The authors should answer "Yes" if the results are accompanied by error bars, confidence intervals, or statistical significance tests, at least for the experiments that support the main claims of the paper.
        \item The factors of variability that the error bars are capturing should be clearly stated (for example, train/test split, initialization, random drawing of some parameter, or overall run with given experimental conditions).
        \item The method for calculating the error bars should be explained (closed form formula, call to a library function, bootstrap, etc.)
        \item The assumptions made should be given (e.g., Normally distributed errors).
        \item It should be clear whether the error bar is the standard deviation or the standard error of the mean.
        \item It is OK to report 1-sigma error bars, but one should state it. The authors should preferably report a 2-sigma error bar than state that they have a 96\% CI, if the hypothesis of Normality of errors is not verified.
        \item For asymmetric distributions, the authors should be careful not to show in tables or figures symmetric error bars that would yield results that are out of range (e.g. negative error rates).
        \item If error bars are reported in tables or plots, The authors should explain in the text how they were calculated and reference the corresponding figures or tables in the text.
    \end{itemize}

\item {\bf Experiments compute resources}
    \item[] Question: For each experiment, does the paper provide sufficient information on the computer resources (type of compute workers, memory, time of execution) needed to reproduce the experiments?
    \item[] Answer: \answerYes{} 
    \item[] Justification: We indicate the information of compute resources in appendix~\ref{appendix:results}.
    \item[] Guidelines:
    \begin{itemize}
        \item The answer NA means that the paper does not include experiments.
        \item The paper should indicate the type of compute workers CPU or GPU, internal cluster, or cloud provider, including relevant memory and storage.
        \item The paper should provide the amount of compute required for each of the individual experimental runs as well as estimate the total compute. 
        \item The paper should disclose whether the full research project required more compute than the experiments reported in the paper (e.g., preliminary or failed experiments that didn't make it into the paper). 
    \end{itemize}
    
\item {\bf Code of ethics}
    \item[] Question: Does the research conducted in the paper conform, in every respect, with the NeurIPS Code of Ethics \url{https://neurips.cc/public/EthicsGuidelines}?
    \item[] Answer: \answerYes{} 
    \item[] Justification:  We strictly adhere to the code of ethics.
    \item[] Guidelines:
    \begin{itemize}
        \item The answer NA means that the authors have not reviewed the NeurIPS Code of Ethics.
        \item If the authors answer No, they should explain the special circumstances that require a deviation from the Code of Ethics.
        \item The authors should make sure to preserve anonymity (e.g., if there is a special consideration due to laws or regulations in their jurisdiction).
    \end{itemize}

\item {\bf Broader impacts}
    \item[] Question: Does the paper discuss both potential positive societal impacts and negative societal impacts of the work performed?
    \item[] Answer: \answerYes{} 
    \item[] Justification: The potential impacts are stated in the Conclusion, see Section ~\ref{conclu}.
    \item[] Guidelines:
    \begin{itemize}
        \item The answer NA means that there is no societal impact of the work performed.
        \item If the authors answer NA or No, they should explain why their work has no societal impact or why the paper does not address societal impact.
        \item Examples of negative societal impacts include potential malicious or unintended uses (e.g., disinformation, generating fake profiles, surveillance), fairness considerations (e.g., deployment of technologies that could make decisions that unfairly impact specific groups), privacy considerations, and security considerations.
        \item The conference expects that many papers will be foundational research and not tied to particular applications, let alone deployments. However, if there is a direct path to any negative applications, the authors should point it out. For example, it is legitimate to point out that an improvement in the quality of generative models could be used to generate deepfakes for disinformation. On the other hand, it is not needed to point out that a generic algorithm for optimizing neural networks could enable people to train models that generate Deepfakes faster.
        \item The authors should consider possible harms that could arise when the technology is being used as intended and functioning correctly, harms that could arise when the technology is being used as intended but gives incorrect results, and harms following from (intentional or unintentional) misuse of the technology.
        \item If there are negative societal impacts, the authors could also discuss possible mitigation strategies (e.g., gated release of models, providing defenses in addition to attacks, mechanisms for monitoring misuse, mechanisms to monitor how a system learns from feedback over time, improving the efficiency and accessibility of ML).
    \end{itemize}
    
\item {\bf Safeguards}
    \item[] Question: Does the paper describe safeguards that have been put in place for responsible release of data or models that have a high risk for misuse (e.g., pretrained language models, image generators, or scraped datasets)?
    \item[] Answer: \answerNA{} 
    \item[] Justification: This paper poses no such risks.
    \item[] Guidelines:
    \begin{itemize}
        \item The answer NA means that the paper poses no such risks.
        \item Released models that have a high risk for misuse or dual-use should be released with necessary safeguards to allow for controlled use of the model, for example by requiring that users adhere to usage guidelines or restrictions to access the model or implementing safety filters. 
        \item Datasets that have been scraped from the Internet could pose safety risks. The authors should describe how they avoided releasing unsafe images.
        \item We recognize that providing effective safeguards is challenging, and many papers do not require this, but we encourage authors to take this into account and make a best faith effort.
    \end{itemize}

\item {\bf Licenses for existing assets}
    \item[] Question: Are the creators or original owners of assets (e.g., code, data, models), used in the paper, properly credited and are the license and terms of use explicitly mentioned and properly respected?
    \item[] Answer: \answerYes{} 
    \item[] Justification: All codes of Fuz-RL are implemented based on the SpinningUp \cite{achiam2018spinning} which is explicitly mentioned and properly respected in Section \ref{main:exp}.
    \item[] Guidelines:
    \begin{itemize}
        \item The answer NA means that the paper does not use existing assets.
        \item The authors should cite the original paper that produced the code package or dataset.
        \item The authors should state which version of the asset is used and, if possible, include a URL.
        \item The name of the license (e.g., CC-BY 4.0) should be included for each asset.
        \item For scraped data from a particular source (e.g., website), the copyright and terms of service of that source should be provided.
        \item If assets are released, the license, copyright information, and terms of use in the package should be provided. For popular datasets, \url{paperswithcode.com/datasets} has curated licenses for some datasets. Their licensing guide can help determine the license of a dataset.
        \item For existing datasets that are re-packaged, both the original license and the license of the derived asset (if it has changed) should be provided.
        \item If this information is not available online, the authors are encouraged to reach out to the asset's creators.
    \end{itemize}

\item {\bf New assets}
    \item[] Question: Are new assets introduced in the paper well documented and is the documentation provided alongside the assets?
    \item[] Answer: \answerYes{} 
    \item[] Justification: New assets are attached in the supplemental material.
    \item[] Guidelines:
    \begin{itemize}
        \item The answer NA means that the paper does not release new assets.
        \item Researchers should communicate the details of the dataset/code/model as part of their submissions via structured templates. This includes details about training, license, limitations, etc. 
        \item The paper should discuss whether and how consent was obtained from people whose asset is used.
        \item At submission time, remember to anonymize your assets (if applicable). You can either create an anonymized URL or include an anonymized zip file.
    \end{itemize}

\item {\bf Crowdsourcing and research with human subjects}
    \item[] Question: For crowdsourcing experiments and research with human subjects, does the paper include the full text of instructions given to participants and screenshots, if applicable, as well as details about compensation (if any)? 
    \item[] Answer: \answerNA{} 
    \item[] Justification: This paper does not include crowdsourcing nor research with human subjects.
    \item[] Guidelines:
    \begin{itemize}
        \item The answer NA means that the paper does not involve crowdsourcing nor research with human subjects.
        \item Including this information in the supplemental material is fine, but if the main contribution of the paper involves human subjects, then as much detail as possible should be included in the main paper. 
        \item According to the NeurIPS Code of Ethics, workers involved in data collection, curation, or other labor should be paid at least the minimum wage in the country of the data collector. 
    \end{itemize}

\item {\bf Institutional review board (IRB) approvals or equivalent for research with human subjects}
    \item[] Question: Does the paper describe potential risks incurred by study participants, whether such risks were disclosed to the subjects, and whether Institutional Review Board (IRB) approvals (or an equivalent approval/review based on the requirements of your country or institution) were obtained?
    \item[] Answer: \answerNA{}  
    \item[] Justification: This paper does not involve any crowdsourcing nor research with human subjects.
    \item[] Guidelines:
    \begin{itemize}
        \item The answer NA means that the paper does not involve crowdsourcing nor research with human subjects.
        \item Depending on the country in which research is conducted, IRB approval (or equivalent) may be required for any human subjects research. If you obtained IRB approval, you should clearly state this in the paper. 
        \item We recognize that the procedures for this may vary significantly between institutions and locations, and we expect authors to adhere to the NeurIPS Code of Ethics and the guidelines for their institution. 
        \item For initial submissions, do not include any information that would break anonymity (if applicable), such as the institution conducting the review.
    \end{itemize}

\item {\bf Declaration of LLM usage}
    \item[] Question: Does the paper describe the usage of LLMs if it is an important, original, or non-standard component of the core methods in this research? Note that if the LLM is used only for writing, editing, or formatting purposes and does not impact the core methodology, scientific rigorousness, or originality of the research, declaration is not required.
    \item[] Answer: \answerNA{} 
    \item[] Justification: The core method development in Fuz-RL does not involve LLMs as any important, original, or non-standard components.
    \item[] Guidelines:
    \begin{itemize}
        \item The answer NA means that the core method development in this research does not involve LLMs as any important, original, or non-standard components.
        \item Please refer to our LLM policy (\url{https://neurips.cc/Conferences/2025/LLM}) for what should or should not be described.
    \end{itemize}

\end{enumerate}

\end{document}